\theoremstyle{plain}
\newtheorem{theorem}{Theorem}
\newtheorem{proposition}[theorem]{Proposition}
\newtheorem{lemma}[theorem]{Lemma}
\theoremstyle{definition}
\newtheorem{definition}[theorem]{Definition}
\theoremstyle{remark}
\def\x{\boldsymbol{x}}
\begin{document}
	
\title{\huge\bf On Expressivity of Height in Neural Networks}

\author{Feng-Lei Fan$^{1}$, \textit{Member, IEEE}, Ze-Yu Li$^{1}$, Huan Xiong$^{2}$,
Tieyong Zeng$^{1*}$ 
\thanks{*This author is corresponding author. All authors contribute equally. The order of author names is alphabetical.}
\thanks{$^{1}$Feng-Lei Fan (hitfanfenglei@gmail.com), Ze-Yu Li, and Tieyong Zeng are with Department of Mathematics, The Chinese University of Hong Kong, Hong Kong }
\thanks{$^{2}$Huan Xiong is with the Institute of Advanced Mathematics, Harbin Institute of Technology, Harbin, Heilongjiang Province, China} 
}

\markboth{Journal of \LaTeX\ Class Files,~Vol.~14, No.~8, August~2021}%
{XXX \MakeLowercase{\textit{et al.}}: On Expressivity of Height in Neural Networks}

	
\maketitle
	
\begin{abstract}
In this work, beyond width and depth, we augment a neural network with a new dimension called height by intra-linking neurons in the same layer to create an intra-layer hierarchy, which gives rise to the notion of height. We call a neural network characterized by width, depth, and height a 3D network. To put a 3D network in perspective, we theoretically and empirically investigate the expressivity of height. We show via bound estimation and explicit construction that given the same number of neurons and parameters, a 3D ReLU network of width $W$, depth $K$, and height $H$ has greater expressive power than a 2D network of width $H\times W$ and depth $K$, \textit{i.e.}, $\mathcal{O}((2^H-1)W)^K)$ vs $\mathcal{O}((HW)^K)$, in terms of generating more pieces in a piecewise linear function. Next, through approximation rate analysis, we show that by introducing intra-layer links into networks, a ReLU network of width $\mathcal{O}(W)$ and depth $\mathcal{O}(K)$ can approximate polynomials in $[0,1]^d$ with error $\mathcal{O}\left(2^{-2WK}\right)$, which improves $\mathcal{O}\left(W^{-K}\right)$ and $\mathcal{O}\left(2^{-K}\right)$ for fixed width networks. Lastly, numerical experiments on 5 synthetic datasets, 15 tabular datasets, and 3 image benchmarks verify that 3D networks can deliver competitive regression and classification performance.
\end{abstract}

\begin{IEEEImpStatement}
Due to the constraint of hardware, sustaining the law of "the deeper the better" is increasingly difficult. We propose a new view of network design by introducing the so-called height into a network's topology. Height is based on
intra-layer links, which were already seen in temporal models and other miscellaneous places. However, this work generalizes intra-layer links from temporal models to generic foundational models and interprets intra-layer links as a new dimension of height, which provides a unique and systematic framework for designing a network. Both theoretical and empirical experiments show that the height network is expressive, and can deliver the superior performance in machine learning tasks.
\end{IEEEImpStatement}
	
\begin{IEEEkeywords}
Deep learning, Network Topology, Height, Expressivity
\end{IEEEkeywords}
	
\section{INTRODUCTION}

\IEEEPARstart{I}n the last eighties, the popular network architectures are shallow due to the constraint of computing resources. Over the past decade, deep networks such as ResNet and Transformer have made lots of successes in many important fields \cite{lecun2015deep}, giving the notion that “the deeper the better". We call shallow and deep networks 1D and 2D networks, respectively. The development of deep/large networks has greatly increased the cost of training and deployment. The GPU cluster that costs millions of dollars is needed to train a large model. Thus, the deeper/larger models are more and more dominated by large institutes, which distort the ecosystem of AI. Hardware-wise, the most advanced GPUs will soon be made by the 1nm fabrication, meeting its physical ceiling. Therefore, the trend of “the deeper the better" is hard to be sustained in the foreseeable future. What may future architectures look like?

In this paper, beyond width and depth, we propose a novel network by augmenting one more dimension, referred to as height. We call networks characterized by width ($W$), depth ($K$), and height ($H$) 3D networks. Specifically, height is introduced by stacking additional neurons in a layer of standard 2D networks and intra-linking new neurons with the original ones, as Figure \ref{Figure_intra_linked} shows. We call links connecting new neurons and the original within a layer intra-layer links. Intra-layer links realize interconnections within a layer such that new neurons cannot be straightened out, meaning that a hierarchy is constructed within a layer. Such a hierarchy naturally generates a notion of height. In other words, inserting intra-layer links in a layer can realize the 2D-3D transformation of a network. 

\begin{figure}[h]
\vspace{-0.1cm}
\center{\includegraphics[width=\linewidth] {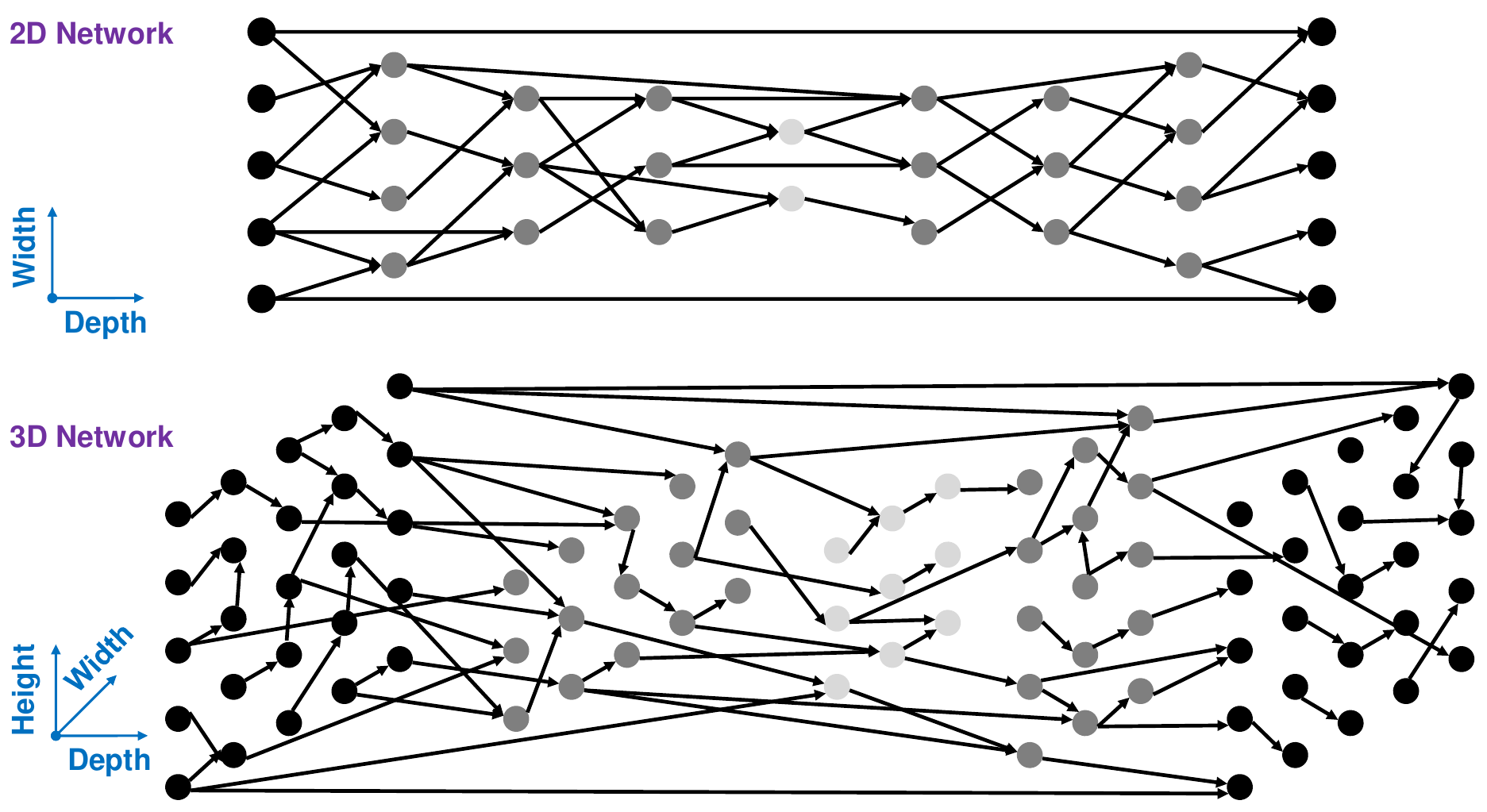}}
\caption{(a) A 2D network characterized by width and depth. (b) A 3D network characterized by width, depth, and height.}
\vspace{-0.3cm}
\label{Figure_intra_linked}
\end{figure}

Temporal models such as Recurrent Neural Networks (RNNs) \cite{jordan1997serial}, Long Short-Term Memory (LSTM) \cite{hochreiter1997long}, and Gated Recurrent Units (GRUs) \cite{cho2020learning} also utilize the intra-layer links to connect hidden states at different timestamps, which is a natural design for sequential data. In contrast, we assert that intra-layer connections can be leveraged for a broader range of data types and serve as a critical component of foundational models. Also, concurrent with this work, \cite{zhang2022theoretically} shows that using intra-layer links can improve the generalization of spiking networks. \cite{sadat2023connected} confirms that using intra-layer links can result in rapid convergence. Unlike these studies, our approach strategically elevates intra-layer links from a mere art to a distinctive and systematic perspective on dimensional augmentation for foundational network design. This perspective injects new momentum into the utilization of intra-layer connections, promoting innovative developments in network architecture.

To the best of our knowledge, height was few studied in neural networks, except for \cite{zhang2022neural} which used a trainable compositional network as the activation of a layer and took the number of compositions as the height of the overall network. We argue that such a definition of height is based on an abstract view, while height is more naturally introduced by linking neurons in a layer.  
Theoretically, a 3D network can be expanded into a much deeper 2D network. However, the number of parameters will increase many times for computing because augmenting the dimension of height provides a special and efficient parameter-sharing scheme. For example, features of neurons in a layer not only serve neurons in succeeding layers but also are used in neurons in the same layer that intra-link them. It will be tedious to represent neurons that serve multiple places.

We show the benefits of height from the perspective of approximation theory by addressing the key question: why is a 3D network more expressive than a 2D network given the same number of neurons and parameters in a layer? We focus on networks using the rectified linear units (ReLU). Our theoretical results are twofold: 1) Through bound analysis and explicit construction, we prove that given the same number of neurons, a 3D network can produce many more pieces than a standard 2D network, and the gain is at most exponential. This means that the augmented dimension of height can boost the expressive ability of a network without increasing the number of parameters in a network. 2) We further compare the universal approximation error of 2D and 3D networks. Assume that in a 2D network, $W$ is the width of a network, and $K$ is the depth of a network, while in the corresponding 3D network (all neurons in a layer are intra-linked), the height equals $W$, the width is $1$, and the depth is $K$. By introducing intra-layer links into networks, a 3D ReLU network can approximate polynomials in $[0,1]^d$ with the error of $\mathcal{O}\left(2^{-2WK}\right)$, which is a non-trivial extension of the result $\mathcal{O}\left(W^{-K}\right)$ in \cite{shen2019deep} and $\mathcal{O}\left(2^{-K}\right)$ in \cite{yarotsky2017error} for fixed width networks. This also means that without increasing the number of parameters, 2D-3D transformation via intra-layer links can bring great improvement in approximation power. Furthermore, encouraged by the theoretical analysis, we empirically confirm the power of 3D networks with systematic experiments on 5 synthetic datasets, 15 tabular datasets, and 3 image benchmarks. To summarize, our contributions are threefold:

\begin{table}[!t]
\centering\footnotesize
\caption{Comparison between the approximation ability of 2D and 3D networks. In a 2D network, $W$ is the width of a network, and $K$ is the depth of a network. In a 3D network (all neurons in a layer are intra-linked), the height equals $W$, the width is $1$, and the depth is $K$.}
\label{Table:summary_of-separation}
\renewcommand{\arraystretch}{1.2}
\setlength{\tabcolsep}{5pt}
\begin{tabular}{c|c|c}
\hline
 & 2D & 3D (With Intra-layer Links) \\
 \hline
\#Pieces & $\mathcal{O}\left((W+1)^K\right)$ & $\mathcal{O}\left((2^W-1)W+1)^K\right)$ \\
\hline
Error Rate & $\mathcal{O}(W^{-K})$, $\mathcal{O}(2^{-K})$  & $\mathcal{O}(2^{-2WK})$ \\
\hline
\end{tabular}
\vspace{-0.4cm}
\end{table}

\begin{enumerate}
    \item We propose a 3D neural network by augmenting the dimension of height using intra-layer links, which is a valuable addition to the neural network family.
    \item We theoretically prove that without increasing the number of parameters, using intra-layer links can greatly boost the representation power of a network in terms of the number of pieces and approximation rate.
    \item We use systematic experiments to demonstrate the empirical advantages of 3D networks.
\end{enumerate}

\section{RELATED WORK}

\textbf{The power of width}. In the early days, it was proved that given an arbitrary number of neurons (width), a one-hidden-layer network could approximate any continuous function \cite{hornik1989multilayer}. Recently, the power of width has been revisited. \cite{fan2020quasi} showed that the width and depth of a ReLU network can be converted to each other. \cite{lu2017expressive, park2020minimum} demonstrated that the minimum width of a network to learn a function from $\mathbb{R}^m$ to $\mathbb{R}^n$ is $m+n$. Levine et al. \cite{levine2020limits} showed that widening is necessary when deepening a transformer; otherwise, the performance of the model cannot fulfill the expectation. In brief, the width and depth should be balanced according to these studies. 
 
\textbf{The power of depth}. Recently, exploring the approximation ability of a deep network has been increasingly popular. The approximation rates of deep networks for different classes of functions were investigated such as 1-Lipschitz continuous functions \cite{shen2020deep, yarotsky2018optimal}, smooth functions \cite{lu2021deep, yarotsky2020phase}, and band-limited functions \cite{montanelli2021deep}. Also, it was found that designing novel activation functions to replace ReLU can achieve the superior approximation rate such as Floor-ReLU networks \cite{shen2021deep}, Elementary Unit Activation Function \cite{zhang2022deep}, Floor-Exponential-Step function \cite{shen2021neural}, and $\{\sin, \arcsin\}$\cite{yarotsky2021elementary}. Furthermore, it was proved that introducing multiplicative operations can greatly enlarge the hypothesis space \cite{jayakumar2019multiplicative, fan2020universal} and boost the expressivity of a network \cite{fan2023expressivity}.

Due to the widespread applications of deep networks in many important fields \cite{lecun2015deep}, mathematically understanding the power of deep networks has been a central problem in deep learning theory \cite{poggio2020theoretical}. The key issue is figuring out how expressive a deep network is or how increasing depth promotes the expressivity of a neural network better than increasing width. In this regard, there have been a plethora of studies on the expressivity of deep networks \cite{safran2019depth,vardi2020neural,guhring2020expressivity,vardi2021size,safran2022optimization, venturi2022depth, vardi2022width}. 

A popular idea in depth theory is the complexity characterization that introduces appropriate complexity measures for functions represented by neural networks \cite{pascanu2013number, montufar2014number, telgarsky2015representation, montufar2017notes,serra2018bounding,hu2018nearly,xiong2020number,bianchini2014complexity, raghu2017expressive,sanford2022expressivity,joshi2023expressive}, and then reports that increasing depth can greatly boost such a complexity measure. The complexity analysis is to characterize the complexity of the function represented by a neural network, thereby demonstrating that increasing depth can greatly maximize such a complexity measure. Currently, one of the most popular complexity measures is the number of linear regions because it conforms to the functional structure of the widely-used ReLU networks. For example, 
\cite{pascanu2013number, montufar2014number,  montufar2017notes,serra2018bounding, hu2018nearly, hanin2019deep} estimated the bound of the number of linear regions generated by a fully-connected ReLU network by applying Zaslavsky’s Theorem \cite{zaslavsky1997facing}.
\cite{xiong2020number} offered the first upper and lower bounds of the number of linear regions for convolutional networks. Other complexity measures include classification capabilities \cite{malach2019deeper}, Betti numbers \cite{bianchini2014complexity}, trajectory lengths \cite{raghu2017expressive},  global curvature \cite{poole2016exponential}, and topological entropy \cite{bu2020depth}.
Please note that using complexity measures to justify the power of depth demands a tight bound estimation. Otherwise, it is insufficient to say that shallow networks cannot be as powerful as deep networks, since deep networks cannot reach the upper bound.


The construction analysis is to find an explicit family of functions that are hard to approximate by a shallow network, but can be efficiently approximated by a deep network. \cite{eldan2016power} built a special radial function that is expressible by a 3-layer
neural network with a polynomial number of neurons, but a 2-layer network can do the same level approximation only with an exponential number of neurons. Later, \cite{safran2017depth} extended this result to a ball function, which is a more natural separation result. \cite{venturi2021depth} generalized the construction of this type to a non-radial function.
\cite{telgarsky2015representation, telgarsky2016benefits} used an $\mathcal{O}(k^2)$-layer network to construct a sawtooth function. Given that such a function has an exponential number of pieces, it cannot be expressed by an $\mathcal{O}(k)$-layer network, unless the width is $\mathcal{O}(\exp(k))$. \cite{arora2016understanding} estimated the maximal number of pieces a network can produce, and established the size-piece relation to advance the depth separation results from ($k^2$, $k$) to ($k$, $k'$), where $k'<k$. \cite{daniely2017depth} proved that poly-size depth neural networks with (exponentially) bounded weights cannot approximate $f: \mathbb{S}^{d-1}\times \mathbb{S}^{d-1} \to \mathbb{R} $ which has the form $f(\x,\x')=g(\langle \x,\x'\rangle)$ whenever $g$ cannot be expressed by a low-degree polynomial. 
Other smart constructions include polynomials \cite{rolnick2017power}, functions of a compositional structure \cite{poggio2017and}, Gaussian mixture models \cite{jalali2019efficient}, and among others. In a broad sense, we summarize the elements of establishing a depth separation theorem as the following: i) there exists a function representable by a deep network; ii) such a function cannot be represented by a shallow network whose width is lower than a large threshold. Recently, \cite{malach2019deeper} explored the relationship between the expressive properties of a deep network and the trainability using gradient descent-based methods.
\cite{vardi2022width} proved that there are no functions that can be expressed by wide and shallow neural networks but cannot be approximated by a narrow but deep network.

We do not contend that depth is no longer important when a network has height. Instead, our work still can be regarded as the modification to the existing network. We assume that when a wide and deep network is given, can we make it even more powerful? To this end, we attempt to add intra-layer links to introduce height, which is a 2D-3D transformation for a network, as shown in Figure \ref{fig:2D3D}. In this transformation, while width decreases, depth remains intact. It is worth mentioning that as long as the width is greater than 1, the 2D-3D transformation is feasible. 

\begin{figure}[h]
\vspace{-0.2cm}
    \centering   \includegraphics[width=0.8\linewidth]{ 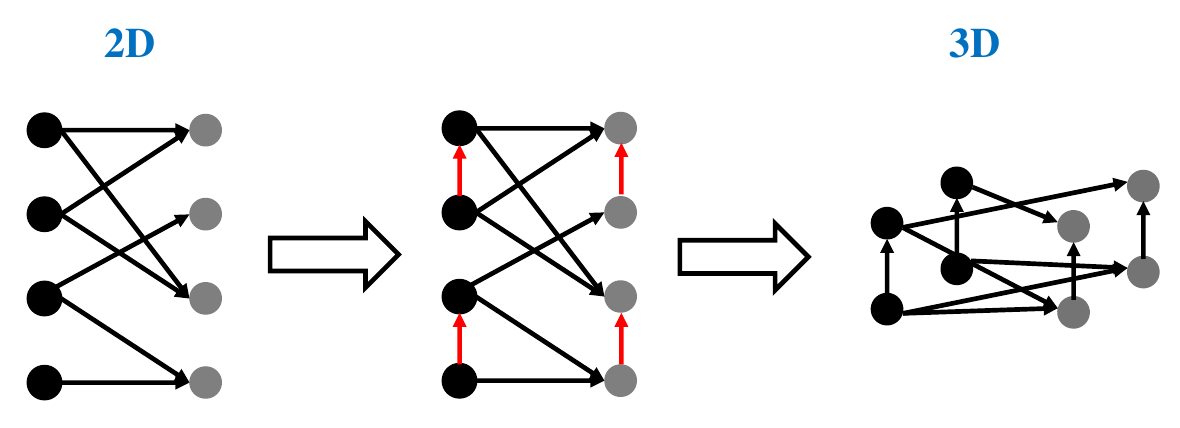}
    \caption{2D-3D transformation via the intra-layer links.}
    \label{fig:2D3D}
    \vspace{-0.3cm}
\end{figure}

\textbf{Temporal neural networks}. There are three main types of temporal neural networks: Recurrent Neural Networks (RNNs) \cite{jordan1997serial}, Long Short-Term Memory (LSTM) \cite{hochreiter1997long}, and Gated Recurrent Units (GRUs) \cite{cho2020learning}. RNNs have been a cornerstone of sequential data processing in machine learning. Introduced in the early 1980s, RNNs are designed to recognize patterns in sequences of data by maintaining a hidden state that captures information from previous time steps. However, traditional RNNs face challenges such as vanishing and exploding gradients, which hinder their ability to learn long-range dependencies effectively. To address these limitations, Hochreiter and Schmidhuber proposed LSTMs that incorporate a specialized architecture that includes memory cells and gating mechanisms, allowing them to retain information over extended periods. This design enables LSTMs to overcome the vanishing gradient problem, making them adept at capturing long-term dependencies in sequential data. LSTMs have since become the standard choice for various applications, including natural language processing, music generation, and video analysis, demonstrating superior performance in tasks that require understanding context over longer sequences. Recent advancements have further refined RNN architectures, including variations such as GRUs and attention mechanisms, which enhance the capability of sequential models to focus on relevant parts of the input data. 

There are extensive inner links between hidden neurons in temporal models, however, the proposed height network is fundamentally different from temporal models, in the sense that the proposed height network is a general-purpose model for the classification and regression of a broader range of data, though we do not exclude the possibility of translating the height network into sequential data in the future.

\textbf{Inner links in Miscellaneous Venues}. Concurrent with our research, inner connections were also explored in various studies. \cite{anderson2023connections} introduced a pairwise connection between two filters within a neural network, demonstrating that convolutional neural networks (CNNs) can benefit from such connections. Similarly, \cite{shahir2023connected} found that linking hidden neurons facilitates rapid convergence. \cite{zhang2024intrinsic} elucidated that the efficacy of spiking neural networks (SNNs) is significantly influenced by the configuration of intrinsic structures, proposing the use of inner-connection architectures to enhance learning by improving the adaptability of the integration operation. In contrast to these studies, which primarily view inner links as beneficial connections, our research examines inner links from a foundational perspective of network topology. We advocate for the consideration of these connections as a basis for height, thereby offering a more systematic and extensible framework for network design.


\section{Notation and Definition}

\noindent \textbf{Notation 1}
(Standard 2D networks). For a 2D $\mathbb{R}^{w_{0}} \rightarrow \mathbb{R}$  ReLU DNN with widths  $w_{1}, \ldots, w_{K}$ of $K$  hidden layers, we use  $\mathbf{f}_{0}=\left[f_{0}^{(1)}, \ldots, f_{0}^{\left(w_{0}\right)}\right]=\mathbf{x} \in \mathbb{R}^{w_{0}}$ to denote the input of the network. Let $\mathbf{f}_{k}=\left[f_{k}^{(1)}, \ldots, f_{k}^{\left(w_{k}\right)}\right] \in \mathbb{R}^{w_{k}}, k=1,\cdots,K,$ be the vector composed of outputs of all neurons in the $k$-th layer. The pre-activation of the $j$-th neuron in the $k$-th layer and the corresponding neuron is given by 
$$
g_{k}^{(j)}=\left\langle\mathbf{a}_{k}^{(j)}, \mathbf{f}_{k-1}\right\rangle+b_{k}^{(j)}\quad \text{and} \quad f_{k}^{(j)}=\sigma\left(g_{k}^{(j)}\right),
$$
respectively, where $\sigma(\cdot)$ is the ReLU activation, and $\mathbf{a}_{k}^{(j)} \in \mathbb{R}^{w_{k-1}}, b_{k}^{(j)} \in \mathbb{R}$ are parameters. The output of this network is $g_{K+1}=\left\langle\mathbf{a}_{K}, \mathbf{f}_{K}\right\rangle+b_{k}$ for some $\mathbf{a}_{K} \in \mathbb{R}^{w_{K}}$, $b_{K} \in \mathbb{R}$. 

\begin{definition}[Width and depth of 2D networks \cite{arora2016understanding}] 
For any number of hidden layers $K \in \mathbb{N}$, input and output dimensions  $w_{0}, w_{K+1} \in \mathbb{N}$, an $\mathbb{R}^{w_{0}} \rightarrow \mathbb{R}^{w_{K+1}}$ fully-connected network is given by specifying a sequence of $K$ natural numbers  $w_{1}, w_{2}, \ldots, w_{K}$ representing widths of the hidden layers. The depth of the network is defined as $K$, which is the number of (affine transforms, activations). The width of the network is $\max \left\{w_{1}, \ldots, w_{K}\right\}$. If and only if $w_{1}=w_{2}=\ldots=w_{K}=W$, we call such a network horizontally uniform. The width of a horizontally uniform network is $W$. Such a network is denoted as $\mathcal{N}_{W,K}^{2}$.
\end{definition}

\noindent \textbf{Notation 2} (Standard 3D networks) For a 3D $\mathbb{R}^{w_{0}} \rightarrow \mathbb{R}$ ReLU DNN with neurons $[(w_{11},..,w_{1H_1}), (w_{21},..,w_{2H_2}) \ldots, (w_{K1},..,w_{KH_K})]$ in $K$ hidden layers, where in the $k$-th layer, there are $H_k$ floors with $(w_{k1},\cdots,w_{kH_k})$ neurons at each floor, respectively, we now use the matrix $\mathbf{G}_{h_1,h_2}^k \in \mathbb{R}^{w_{kh_1}\times w_{kh_2}}$, where $h_2>h_1$, to denote the connecting operations between the $h_1$-th and $h_2$-th floors within the $k$-th hidden layer. If $\mathbf{G}^k_{h_1,h_2}(n_1,n_2)\neq 0$, it means that the output of the $n_1$-th neuron in the $h_1$-th floor is fed into the $n_2$-th neuron in the $h_2$-th floor, and multiplied by a coefficient $\mathbf{G}^k_{h_1,h_2}(n_1,n_2)$; otherwise, the output of the $n_1$-th neuron is not. $\mathbf{G}^k_{h_1,h_2 \leq h_1}=0$ by default, since no loops are allowed in a network. Similar to the classical ReLU DNN,  we use  $\tilde{\mathbf{f}}_{0}=\mathbf{x} \in \mathbb{R}^{w_{0}}$ and $\tilde{\mathbf{f}}_{k}=\left[\tilde{\mathbf{f}}_{k}^{(1)}, \ldots, \tilde{\mathbf{f}}_{k}^{\left(w_{k1}\right)},\ldots, \tilde{\mathbf{f}}_{k}^{(\sum_{t=1}^{H_k-1}w_{kt}+1)}, \ldots, \tilde{\mathbf{f}}_{k}^{\left(\sum_{t=1}^{H_k}w_{kt}\right)}\right] \in \mathbb{R}^{\sum_{t=1}^{H_k}w_{kt}}$ to denote the input and the outputs of the $k$-th layer, respectively. The $j$-th pre-activation in the $k$-th layer and the output of the network are computed as the following:  
$$g_{k}^{(j)}=\left\langle\mathbf{a}_{i}^{(j)}, \tilde{\mathbf{f}}_{k-1}\right\rangle+b_{i}^{(j)}\quad$$ and
$$\quad \tilde{f}_{k}^{(j)}=\sigma\left(g_{k}^{(j)}+\sum_{p<j}\mathbf{G}_{h(p),h(j)}^k(n(p),n(j))\tilde{f}_{k}^{(p)}\right)$$  
for each $j$, where $h(l)$ is a function mapping the $l$-th neuron into their floors, and $n(l)$ is a function mapping the $l$-th neuron into the order in its floor. For example, if $w_{k1}<l<w_{k1}+w_{k2}$, then the $(l-w_{k1})$-th neuron is in the $2$-nd floor.

\begin{definition}[Width, depth, and height of 3D networks] For an $\mathbb{R}^{w_{0}} \rightarrow \mathbb{R}$ ReLU DNN with neurons $[(w_{11},..,w_{1H_1}), (w_{21},..,w_{2H_1}) \ldots, (w_{K1},..,w_{KH_K})]$ in $K$ hidden layers, respectively, the depth of a network is $K+1$, the width is $\max_k\max\{w_{k1},\ldots, w_{kH_k}\}$, and the height is $\max\{H_1,\ldots,H_K\}$. If and only if for each layer $H_1=H_2\ldots=H_k=H$, we call such a 3D network vertically uniform. If and only if for each layer and each floor, $w_{k1}=w_{k2}=\cdots=w_{kH_k}=W$, we call such a 3D network horizontally uniform. The width, depth, and height of a horizontally and vertically uniform network are $W$, $K$, and $H$. We denote it as $\mathcal{N}_{W,K,H}^3$.   
\end{definition}

\begin{figure}[h]
\vspace{-0.2cm}
\center{\includegraphics[width=0.8\linewidth] {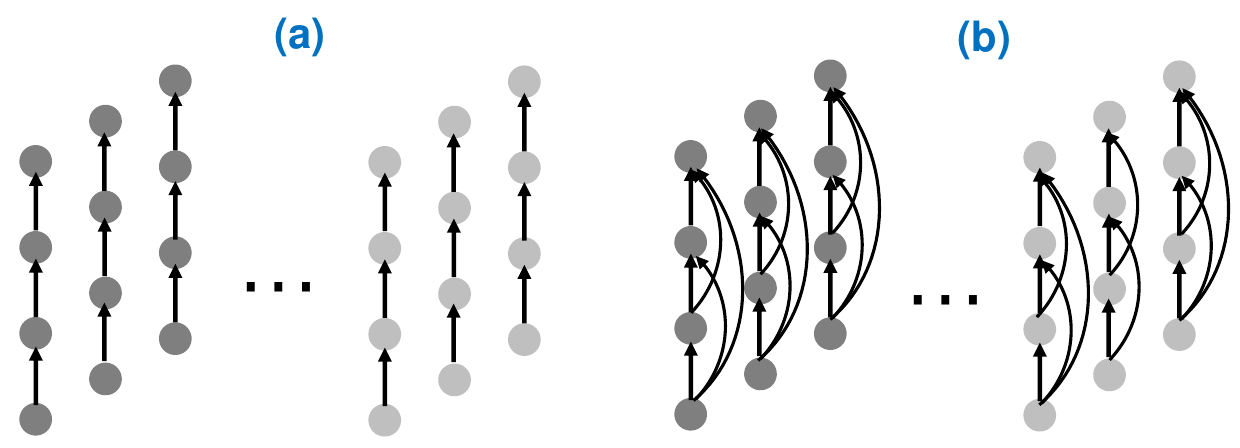}}
\caption{Two types of horizontally and vertically uniform networks are used in this paper.}
\vspace{-0.3cm}
\label{Figure_three_networks}
\end{figure}

In this paper, to avoid the confusion of notations and capture the essence of the problem, we will often assume a 2D network is horizontally uniform, and a 3D network is horizontally and vertically uniform. We highlight that we do not need to insert dense connections between two floors.
Especially, as Figure \ref{Figure_three_networks} shows, we are interested in two cases: (a) in each layer, every 2 neurons from two neighboring floors are linked, which is the simplest 3D network. In this situation, $\mathbf{G}_{h-1,h}^k\neq 0$ is a negative identity matrix for a horizontally uniform network; (b) in each layer, a neuron in the $h$-th floor is connected by a neuron from the preceding floors. In this situation, $\mathbf{G}_{<h,h}^k\neq 0$ is also a negative identity matrix for a horizontally uniform network. 

\begin{definition}[2D-3D Transformation, $\mathcal{N}_{W\times H, K}^2 \to \mathcal{N}_{W,K,H}^3$]
As Figure \ref{fig:2D3D} shows, to realize the 2D-3D transformation, \textit{i.e.}, $\mathcal{N}_{W\times H, K}^2 \to \mathcal{N}_{W,K,H}^3$, one simply intra-links every $H$ neurons in a layer in the head-to-tail manner. 
\end{definition}

Let us analyze the computational complexity in 2D-3D transformation. First, it is straightforward to see that using intra-layer links increases a few or no parameters. The reason is that one parameter is coupled with one link. Since links are sparse, the increased parameters are few. What's more, we can use the constant $-1$ or $1$ as the weight of a link, which does not increase the parameter at all. Second, the complexity of computing a layer with $HW$ neurons in a classical 2D ReLU DNN $\mathcal{N}_{HW,K}$ is $H^2W^2$ multiplications and $H^2W^2$ additions while computing a 3D ReLU DNN $\mathcal{N}_{W,K,H}$ needs $H^2W^2$ multiplications and $H^2W^2+(H-1)\cdot[W]\approx H^2W^2+WH$ additions, where $[\cdot]$ is a ceiling function, which is still quadratic. Thus, the computational cost incurred by adding intra-links is minor. When applying intra-layer links in CNNs, the links can be added between different channels. The computational cost is also minor. In brief, with sparse intra-links, 3D networks are not subjected to a high computational and parametric cost and a low training speed. Therefore, 3D networks are an economical model. If a 3D network can have a good gain compared to its original 2D network, conducting 2D-3D transformation is worthwhile.

Our focus is the network using ReLU activation and the estimation of the number of pieces. Now, we define the sawtooth function and breakpoints which are widely used in investigating the approximation ability of a network, and we will also use it to present the height separation.

\noindent \textbf{Notation 3} (sawtooth functions and breakpoints) We say a piecewise linear (PWL) function  $g: [a, b] \to \mathbb{R}$ is of “$N$-sawtooth" shape, if
$
g(x)=(-1)^{n-1}\left(x-(n-1) \cdot \frac{b-a}{N}\right),$
 for $x \in\left[(n-1) \cdot \frac{b-a}{N}, n \cdot \frac{b-a}{N}\right],n\in [N]$. We say $x_{0}\in\mathbb{R}$ is a breakpoint of a PWL function $g$, if  the left- and right-hands derivatives of $g$ at $x_{0}$ are not equal.
\vspace{-0.1cm}


\section{Approximation Mechanism of height}

Height generated by intra-layer links is a special network structure that embraces a new spatial dimension. An important question is besides the geometrical difference, does height have an essential difference from depth in approximation behaviors? Before the formal mathematical analysis and theoretical derivation, here we share our insights on this question from aspects of the basic mechanism of generating new pieces and the number of affine transforms. 

\begin{figure}[h]
\vspace{-0.2cm}
    \centering   \includegraphics[width=\linewidth]{ 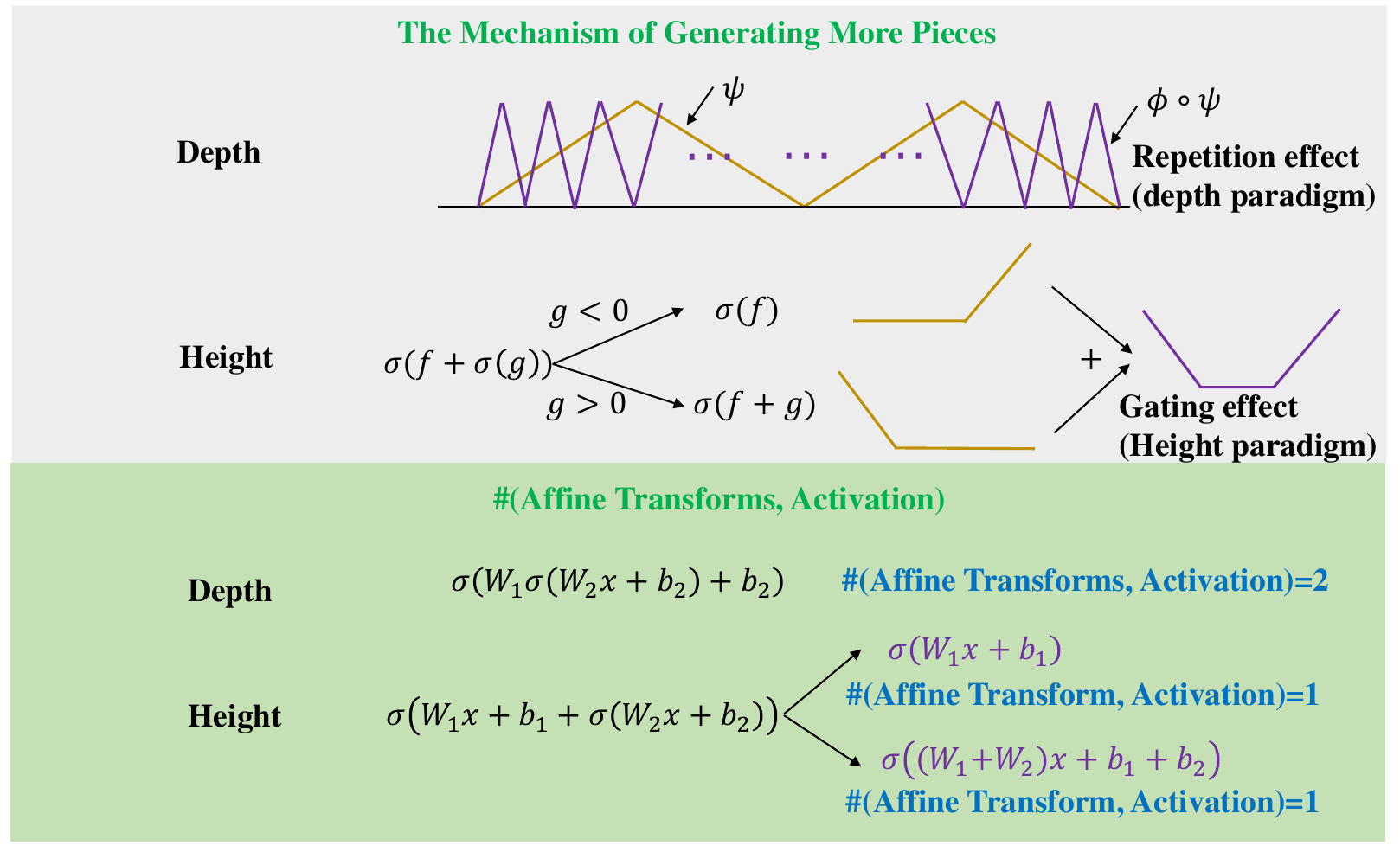}
    \caption{Differences of height and depth in accomplishing higher approximation power in terms of the mechanism of generating more pieces, the number of (affine transforms, activation), and function classes.}
    \label{fig:Mechanism}
    \vspace{-0.4cm}
\end{figure}

$\bullet$ As Figure \ref{fig:Mechanism} shows, their mechanisms of producing pieces are fundamentally different. While the mechanism of adding a new layer is the repetition effect (multiplication), \textit{i.e.}, when composing two layers that generate oscillation, each oscillation can generate more oscillations, which falls into the depth paradigm. The mechanism of height is the gating effect (addition). The neuron being embedded has two activation states, and each state is leveraged to produce a breakpoint. Two states are combined to generate more pieces. Intra-linking more neurons into a neuron can also exponentially boost the number of generated pieces.

$\bullet$ Adding height does not increase the number of affine transforms and activations. As Figure \ref{fig:Mechanism} illustrates, a fully-connected network with two layers involves two times of affine transformation and activation. In contrast, adding height actually exerts a gating effect. When $\sigma(W_2x+b_2)>0$, the output is $\sigma((W_1+W_2)x+b_1+b_2)$; when $\sigma(W_2x+b_2)=0$, the output is $\sigma(W_1x+b_1)$. The number of (affine transform, activation) is still one for both cases. We think the essence of depth is composition, which will lead to increased affine transforms and activations. Therefore, increasing height is different from increasing depth.

$\bullet$ Adding height does not increase parameters substantially. A fully-connected network of width $W$ and depth $K$ has about $W^2K$ parameters. Introducing height to a such a network increase at most $(1+2+\ldots+W-1)K\leq \frac{W^2}{2}K$ parameters, whose number of parameters is no more than a fully-connected network of width $W$ and depth $\lceil \frac{3K}{2} \rceil$. However, topologically a fully-connected network of $WK$ layers has $W^3K$ parameters.

\section{Height Can Greatly Increase the Number of Pieces}

Here, our primary argument is that height can boost the expressive power of a network in terms of the number of pieces. Our investigation consists of two parts: bound estimation and explicit construction. While the bound estimation offers initial evidence, to convincingly illustrate that 3D networks can increase the number of pieces, we need to supply the explicit construction for 3D networks to show that these bounds are tight. The number of pieces in the construction should be bigger than the maximum a 2D network can achieve. For a fair comparison, the 2D and 3D networks being compared share the same number of neurons and parameters, \textit{i.e.}, they respectively have the topologies $\mathcal{N}_{W,K,H}^3$ and $\mathcal{N}_{W\times H, K}^2$.

\subsection{Upper Bound Estimation}
\label{sec:numberofpieces}

\begin{lemma}
\label{old_new}
Let $g: \mathbb{R} \rightarrow \mathbb{R}$  be a PWL function with $w+1$ pieces, then the breakpoints of $f:= \sigma(g)$  consist of two parts: some old breakpoints of $g$ and at most $w+1$ newly produced breakpoints. Furthermore, $f$ has $w+1$ new breakpoints if and only if $g$ has $w+1$ distinct zero points.
\end{lemma}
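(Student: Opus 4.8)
The plan is to partition the breakpoints of $f=\sigma(g)$ according to whether or not they are breakpoints of $g$ — calling those lying in $B(f)\cap B(g)$ ``old'' and those in $B(f)\setminus B(g)$ ``new'' — and then to identify the new ones precisely with the zeros of $g$ that lie in the interior of a linear piece and at which $g$ has nonzero slope. The decomposition $B(f)=(B(f)\cap B(g))\sqcup(B(f)\setminus B(g))$ is a set-theoretic triviality; the content is the characterization and the count of $B(f)\setminus B(g)$.

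First, the local analysis. Fix $x_0\notin B(g)$; then $g$ is affine near $x_0$, say $g(x)=\alpha+\beta(x-x_0)$ on a neighborhood. Applying $\sigma$: if $\alpha>0$ then $f=g$ near $x_0$, so $x_0\notin B(f)$; if $\alpha<0$ then $f\equiv 0$ near $x_0$, so $x_0\notin B(f)$; if $\alpha=\beta=0$ then $f\equiv 0$ near $x_0$; and if $\alpha=0$, $\beta\neq 0$ then $f(x)=\max(\beta(x-x_0),0)$ genuinely has a kink at $x_0$. Hence $B(f)\setminus B(g)$ consists exactly of the zeros of $g$ interior to a linear piece with nonzero local slope, which gives the ``two parts'' claim. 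For the bound: each of the $w+1$ linear pieces is affine, a nonconstant affine function has exactly one zero in $\mathbb{R}$, and a piece on which $g$ vanishes identically contributes nothing to $B(f)\setminus B(g)$ (since $f\equiv 0$ on its interior, and its endpoints lie in $B(g)$); so each piece contributes at most one new breakpoint, giving at most $w+1$ in total.

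For the equivalence I would run a (piece, zero)-incidence count. Writing $a$ for the number of distinct zeros of $g$ interior to a linear piece and $b$ for the number that coincide with a breakpoint of $g$, each nonconstant piece's unique affine zero lies in the closed piece-interval at most once, an interior zero being counted once and a breakpoint-zero being counted twice (once from each adjacent piece), so $a+2b\le w+1$; in particular $g$ has at most $a+b\le w+1$ distinct zeros, with equality forcing $b=0$ and $a=w+1$, i.e.\ exactly one zero per piece and each interior (hence nonconstant, hence a new breakpoint of $f$). Thus if $g$ has $w+1$ distinct zeros then $f$ has at least $w+1$ new breakpoints, hence exactly $w+1$; conversely, $w+1$ new breakpoints of $f$ are $w+1$ distinct zeros of $g$, so $g$ has at least $w+1$ of them, and the same inequality bounds them above by $w+1$. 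The main obstacle — everything else being the routine ReLU computation above — is precisely this bookkeeping: ruling out zeros sitting at breakpoints of $g$ and pieces on which $g\equiv 0$, and extracting the rigidity that ``$w+1$ distinct zeros'' can only be achieved one-per-piece and transversally; the incidence inequality $a+2b\le w+1$ is the cleanest device I know to do both at once.
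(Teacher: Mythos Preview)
Your proof is correct and is precisely the ``direct calculus'' the paper invokes without writing out: the paper's own proof of this lemma is literally the single line ``A direct calculus.'' You have supplied all of the content the paper omits --- the local trichotomy at a non-breakpoint, the one-zero-per-piece bound, and the incidence inequality $a+2b\le w+1$ that forces transversality and rules out breakpoint-zeros in the equality case --- so there is nothing to compare beyond noting that your write-up is the fully fleshed-out version of what the authors left implicit.
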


\begin{proof}
A direct calculus.
\end{proof}

\begin{theorem}[Upper bound of 2D networks $\mathcal{N}_{W,K}^2$]
Let  $f:\mathbb{R}\rightarrow\mathbb{R}$ be a PWL function represented by an $\mathbb{R} \rightarrow \mathbb{R}$ ReLU fully-connected 2D neural network, whose depth is $K$ and widths are $w_{1}, \ldots, w_{K}$, respectively. Then $f$ has at most $\prod_{k=1}^{K}\left(w_{k}+1\right)$ pieces. Let the width and depth of a 2D network be $(W,K)$, \textit{i.e.}, $w_1=w_2=\cdots=w_k=W$, this upper bound is simplified as $\mathcal{O}(W^K)$.
\label{prop:ub_fn}
\end{theorem}

\begin{proof}
Recursively applying Lemma 3 can derive the bound. 
\end{proof}

\textbf{Remark 1.}  Theorem \ref{prop:ub_fn} is actually the univariate case of the bound: $\prod_{k=1}^{K}\sum_{j=0}^n \binom{w_{k}}{j}$, derived in \cite{montufar2017notes} for $n$-dimensional inputs. Theorem \ref{prop:ub_fn} also sharpens the bound in \cite{arora2016understanding}. Previously, they computed the number of pieces produced by a network of depth $k+1$ and widths $w_{1}, \ldots, w_{k}$ as $2^{k+1}\cdot (w_1 + 1) w_2 \cdots w_k$. The reason why their bound has an exponential term is that when considering how ReLU activation increases the number of pieces, they repetitively computed the old breakpoints generated in the previous layer. Our Lemma \ref{old_new} implies that the ReLU activation in fact cannot double the number of pieces of a PWL function.

\begin{lemma}[A corollary of Lemma \ref{old_new}]
\label{linked-struct}
    Let  $g_{1}, g_{2}: \mathbb{R} \rightarrow \mathbb{R}$  be two PWL functions with $w$  breakpoints in total.  $f_{1}:=\sigma\left(g_{1}\right)$  and  $f_{2}:=\sigma\left(g_{2}-f_{1}\right)$. Then the breakpoints of  $f_{2}$ include three parts: some breakpoints of  $g_{2}$, some breakpoints of  $f_{1}$, and at most  $2w+2$  newly produced breakpoints. Furthermore,  $f_{2}$ has $2w+2$ new breakpoints if and only if  $g_{2}-   f_{1}$  has  $2w+2$  distinct zero points.    
\end{lemma}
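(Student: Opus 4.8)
The plan is to apply Lemma~\ref{old_new} twice. Write $w_1$ and $w_2$ for the numbers of breakpoints of $g_1$ and $g_2$, so that $w_1+w_2=w$ (this is the natural reading of ``totally $w$ breakpoints'') and $g_1$ has $w_1+1$ linear pieces, and let $\mathrm{bp}(\cdot)$ denote the breakpoint set of a PWL function. First I would use Lemma~\ref{old_new} to control $f_1=\sigma(g_1)$: its breakpoint set is contained in the union of a subset of the $w_1$ breakpoints of $g_1$ and at most $w_1+1$ newly created breakpoints, so $f_1$ has at most $2w_1+1$ breakpoints. Then I would observe that $h:=g_2-f_1$ is again PWL and can only break where $g_2$ or $f_1$ breaks; hence $\mathrm{bp}(h)\subseteq \mathrm{bp}(g_2)\cup\mathrm{bp}(f_1)$, so $h$ has at most $w_2+(2w_1+1)=w+w_1+1\le 2w+1$ breakpoints, i.e. at most $2w+2$ linear pieces.

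With this in hand, the second application of Lemma~\ref{old_new}, now to the PWL function $h$ (which has some $m+1\le 2w+2$ pieces), gives that $\mathrm{bp}(f_2)=\mathrm{bp}(\sigma(h))$ is the union of a subset of $\mathrm{bp}(h)$ and at most $m+1\le 2w+2$ newly produced breakpoints. Since every breakpoint of $h$ is a breakpoint of $g_2$ or of $f_1$, intersecting the ``old'' part with $\mathrm{bp}(g_2)$ and with $\mathrm{bp}(f_1)$ yields exactly the asserted three-part decomposition.

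For the equality characterization I would run both implications through the ``if and only if'' clause of Lemma~\ref{old_new} applied to $h$. If $h=g_2-f_1$ has $2w+2$ distinct zero points then it has at least $2w+2$ linear pieces, hence exactly $2w+2$, and Lemma~\ref{old_new} forces $f_2=\sigma(h)$ to have exactly $2w+2$ new breakpoints. Conversely, if $f_2$ has $2w+2$ new breakpoints then the number of pieces of $h$ is at least $2w+2$, hence exactly $2w+2$, and the same clause yields that $h$ has $2w+2$ distinct zero points. (When $h$ happens to have fewer than $2w+2$ pieces, both sides of the equivalence are simply false, so it still holds.)

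The argument is essentially a bookkeeping exercise, so the only real care needed is: (i) keeping ``old'' versus ``new'' breakpoints straight through the two $\sigma$-steps, and noticing that the breakpoints of $g_2$ only pass through the subtraction — not through a ReLU — so they are not amplified, which is precisely what keeps the count at $2w+2$ rather than something larger; and (ii) handling the harmless degeneracies (a linear piece of $g_1$ or of $h$ vanishing on an interval, or a newly created breakpoint landing on an existing one), all of which are already absorbed into Lemma~\ref{old_new} and can only make the counts smaller.
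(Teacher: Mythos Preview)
Your argument is correct and is exactly what the paper means by ``a direct corollary of Lemma~\ref{old_new}'': you bound the pieces of $h=g_2-f_1$ via $\mathrm{bp}(h)\subseteq\mathrm{bp}(g_2)\cup\mathrm{bp}(f_1)$ and then apply Lemma~\ref{old_new} to $h$ for both the count and the equivalence. The only side remark is that in the paper's applications $g_1$ and $g_2$ share their breakpoints (both being affine in the previous layer's outputs), so ``totally $w$'' is most naturally the union having $w$ points; your reading $w_1+w_2=w$ still yields the stated bound and the iff, so nothing changes.
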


Let us briefly explain why 3D architectures can produce more pieces. Given two PWL functions $g_{1}$ and $g_{2}$ which have a total of $w$ breakpoints, in the 2D architecture,  $\sigma\left(g_{1}\right)$  and  $\sigma\left(g_{2}\right)$  have totally at most  $2w+2$  breakpoints, which contains at most $w$ old breakpoints of  $g_{1}, g_{2}$  and at most  $2w+2$  newly produced breakpoints. However, in the 3D architecture, $\sigma\left(g_{2}-\sigma\left(g_{1}\right)\right)$, corresponding to height=2 by taking $g_1$ and $g_2$ as the pre-activation of neurons of two floors, can produce more breakpoints because $\sigma(g_1)$ has two states: activated or deactivated. Then, $\sigma(g_1)$ and $\sigma\left(g_{2}-\sigma\left(g_{1}\right)\right)$ consist of at most $w$ old breakpoints of $g_{1}, g_{2}$ and $(w+1)+(2w+2)=3w+3$ new breakpoints.

\begin{theorem}[Upper bound of three dimensional networks $\mathcal{N}_{W,K,H}^3$]
\label{linked_bound}
Let  $f: \mathbb{R} \rightarrow \mathbb{R}$  be a PWL function represented by a ReLU DNN with depth $K$, widths $[(w_{11},..,w_{1H_1}), (w_{21},..,w_{2H_2}) \ldots, (w_{K1},..,w_{KH_K})]$, where $w_{k1}=w_{k2}=\ldots=w_{kH_k}=w_k$, and height $H_k$ in each layer, and the topology of this network is Figure \ref{Figure_three_networks}(a) that $\mathbf{G}_{h-1,h}^{k}$ is a negative identity matrix, then $f$ has at most  $\prod_{k=1}^{K}\left((2^{H_k}-1)w_{k}+1\right)$ pieces. Assume this network is vertically and horizontally uniform, \textit{i.e.}, $w_1=w_2=\cdots=w_K=W$ and $H_1=H_2=\cdots=H_K=H$, this upper bound is simplified as $\mathcal{O}((2^H-1)W+1)^K)$.
\end{theorem}

\begin{proof} 
For conciseness, we only consider the case of $H=2$, and $w_{k1}=w_{k2}=w_k$ for the $k$-th layer, $1\leq k \leq K$. The general case is nothing but repeating the same analysis in each layer several times. We prove by induction on $K$. For the base case  $k=1$, we assume for $j>1$, the neurons  $\tilde{f}_{1}^{(2j-1)}$  and  $\tilde{f}_{2}^{(2j)}$ are linked, where the $2j-1$-th neuron is in the first floor and $2j$-th neuron is in the second floor. The number of breakpoints of  $\tilde{f}_{1}^{(2j)}$, $j=1, \ldots, w_{1}$, is at most $2+(-1)^{j}$. Hence, the first layer yields at most  $3w_{1}+1$ pieces. For the induction step, we assume that for some  $K \geq 1$, any  $\mathbb{R} \rightarrow \mathbb{R}$  ReLU DNN with every two neurons linked in each hidden layer, depth  $K$  and widths  $w_{1}, \ldots, w_{K}$  of  $K$  hidden layers produces at most  $\prod_{k=1}^{K}\left(w_{k}+1\right)$  pieces. Now we consider any $\mathbb{R} \rightarrow \mathbb{R}$  ReLU DNN with every two neurons linked in each hidden layer, depth  $K+1$  and widths  $w_{1}, \ldots, w_{K+1}$ of $K+1$ hidden layers. By the induction hypothesis, each  $\tilde{g}_{K+1}^{(2j-1)}$  has at most  $\prod_{k=1}^{K}\left(3 w_{k}+1\right)-1$  breakpoints. Then the breakpoints of  $\sigma(\tilde{g}_{K+1}^{(2j-1)})$  consist of some breakpoints of $\tilde{g}_{K+1}^{(2j-1)}$ and at most  $\prod_{k=1}^{K}\left(3 w_{k}+1\right)$  newly generated breakpoints. Then  $\tilde{g}_{K+1}^{(2j)}-\tilde{f}_{K+1}^{(2j-1)}$  has at most  $2 \cdot \prod_{k=1}^{K}\left(3 w_{k}+1\right)-1$  breakpoints, based on Lemma \ref{linked-struct}. The breakpoints of  $\tilde{f}_{K+1}^{(2j)}=\sigma(\tilde{g}_{K+1}^{(2j)}-\tilde{f}_{K+1}^{(2j-1)})$  consist of some breakpoints of  $\tilde{g}_{K+1}^{(2j)}-\tilde{f}_{K+1}^{(2j-1)}$ and at most $2\cdot  \prod_{k=1}^{K}\left(3 w_{k}+1\right)$  newly generated breakpoints. Note that $\tilde{g}_{K+1}^{(1)}, \ldots, \tilde{g}_{K+1}^{(2w_{K+1}-1)}$  have totally at most $\prod_{k=1}^{K}\left(3 w_{k}+1\right)-1$ breakpoints. In all, the number of pieces we can therefore get is at most
$
1+w_{K+1} \cdot\left(\prod_{k=1}^{K}\left(3 w_{k}+1\right)+2 \cdot \prod_{k=1}^{K}\left(3 w_{k}+1\right)\right)+\prod_{k=1}^{K}\left(3w_{k}+1\right)-1=\prod_{k=1}^{K+1}\left(3w_{k}+1\right).
$
\vspace{-0.1cm}
\end{proof}

Comparing Theorem \ref{linked_bound} with Theorem \ref{prop:ub_fn}, it is found that given the same number of neurons $W\times K \times H$ and the same number of parameters, arranging them into a 3D network $\mathcal{N}_{W,K,H}^3$ can result in $\mathcal{O}((2^H-1)W+1)^K)$ pieces, which is exponentially larger than $\mathcal{O}(HW+1)^K)$ obtained from a 2D network $\mathcal{N}_{W\times H, K}^2$.

In Supplementary Materials, we supply the bound estimation for multivariate 2D and 3D networks, which shows that a 3D network can achieve a much higher number of linear regions.

\subsection{Tightness of Bounds.} 

We offer constructions to show that this bound is achievable in a depth-bounded but width-unbounded network (depth=3) (Proposition \ref{tight_bound_1}) and a width-bounded (width=3) but depth-unbounded network (Proposition \ref{construct_4^k}) in one-dimensional space. Previously, many bounds \cite{pascanu2013number, montufar2014number,  montufar2017notes, xiong2020number} on linear regions were derived, however, it is unknown whether these bounds are vacuous or tight, particularly for networks with more than one hidden layer. Determining the tightness of a bound is essential in analyzing the approximation ability of a deep network. What makes Propositions \ref{tight_bound_1} and \ref{construct_4^k} special is that they for the first time substantiate that \cite{montufar2017notes}'s bound is tight over an arbitrary three-layer network and deeper networks with small widths, which fills the gap of bound estimation.

The constructions for 3D networks with height 2 in Propositions ~\ref{tight_bound_2} and ~\ref{achieve 3w/2} have a number of pieces larger than the upper bounds of 2D networks. In Proposition~ \ref{sqrt7}, by enumerating all possible cases, we present a construction for a 3D network $\mathcal{N}_{1,K,2}$ which has the width 1, height 2, and arbitrary depth whose number of pieces is larger than $\mathcal{N}_{2,K}$ which has the width 2 and arbitrary depth possibly achieves. Proposition \ref{theorem_n_linked_1_layer} shows that $\left(\frac{(H+1)H}{2}+1\right)$ pieces can be achieved by a 3D network $\mathcal{N}_{1,1,H}$. Proposition \ref{theorem_n_linked_lb_1} provides rather tight constructions for the 3D network $\mathcal{N}_{1,K,4}$. The constructions in Propositions \ref{sqrt7}, \ref{theorem_n_linked_1_layer}, and \ref{theorem_n_linked_lb_1} also have a larger number of pieces than the upper bounds of 2D networks. Therefore, our constructions rigorously confirm that a 3D network has a more powerful expressive ability than a 2D network. 

\begin{proposition}[The bound $\prod_{k=1}^{K}\left(w_{k}+1\right)$ is tight for a depth-bounded but width-unbounded 2D network]
\label{tight_bound_1}
Given an $\mathbb{R} \rightarrow \mathbb{R}$ two-hidden-layer ReLU network, for any width $w_{1} \geq 3, w_{2} \geq 2$ in the first and second hidden layers, there exists a PWL function represented by such a network, whose number of pieces is  $\left(w_{1}+1\right)\left(w_{2}+1\right)$.
\label{twoproduct}
\end{proposition}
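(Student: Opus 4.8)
The plan is to realize the extremal function layer by layer: use the first hidden layer to manufacture a sawtooth $h:\mathbb{R}\to\mathbb{R}$, use the second hidden layer to ``compose'' with it, read off the piece count from Lemma~\ref{old_new}, and close the gap against the upper bound of Theorem~\ref{prop:ub_fn}. Concretely, I would first build a PWL function $h$ with exactly $w_1$ breakpoints that is computed by a single width-$w_1$ ReLU layer (i.e.\ $h=\langle\mathbf c,\tilde{\mathbf f}_1(x)\rangle+c_0$ is a fixed linear combination of its $w_1$ outputs) and that has the key \emph{band-crossing} property: there is an open interval $(m,M)$ such that every $v\in(m,M)$ is attained by $h$ at exactly $w_1+1$ distinct points. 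The second layer then consists of $w_2$ neurons whose weight vectors are all parallel to $\mathbf c$, so the $j$-th pre-activation is $g_2^{(j)}=\alpha_j h(x)+\beta_j$ for scalars $\alpha_j\neq 0,\ \beta_j$; equivalently the second layer computes $G(h(x))$ for a univariate PWL $G$ with $w_2$ breakpoints. Choosing the thresholds $\theta_j:=-\beta_j/\alpha_j$ to be $w_2$ distinct values inside the band, Lemma~\ref{old_new} gives that each $\sigma(g_2^{(j)})=\sigma(\alpha_j h+\beta_j)$ acquires exactly $w_1+1$ \emph{new} breakpoints, since $\alpha_j h+\beta_j$ has $w_1+1$ distinct zeros (one per level-$\theta_j$ crossing of $h$), on top of whatever old breakpoints of $h$ survive the clipping.

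For the first-layer step, the main point — and the reason the hypothesis $w_1\ge 3$ appears — is that a width-$w_1$ layer of ReLUs of the scalar input generically produces a PWL function with $w_1$ breakpoints but with one of its two extreme pieces flat, which only yields level sets of size $w_1$. I would instead use a mixture of ``left-facing'' hinges $\sigma(t_i-x)$ and ``right-facing'' hinges $\sigma(x-t_i)$ with a suitable interleaving of the locations $t_i$, so that all $w_1+1$ pieces of $h$ become genuine monotone ramps with alternating slope signs, each sweeping across a common band $(m,M)$. For instance, for $w_1=3$ one checks that two left-facing hinges placed to the right of a single right-facing hinge can be tuned so the four successive slopes alternate in sign and none vanish, while $w_1\le 2$ provably cannot. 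With such an $h$, all local maxima equal $M$ and all local minima equal $m$, so the $w_1$ breakpoints split into ``peaks'' (value $M$) and ``valleys'' (value $m$).

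Back in the second layer, I would take at least one neuron with $\alpha_j>0$ and at least one with $\alpha_j<0$ — this is exactly where $w_2\ge 2$ is needed. For a term $\sigma(\alpha_j h+\beta_j)$ with $\alpha_j>0$ the surviving old breakpoints are those where $h>\theta_j$, i.e.\ all the peaks; for $\alpha_j<0$ the surviving ones are all the valleys. Hence across the $w_2$ neurons every one of the $w_1$ breakpoints of $h$ survives in at least one term. The network output is $\sum_j a_2^{(j)}\sigma(g_2^{(j)})+b_2$; choosing the thresholds $\theta_j$ distinct and in general position and the output weights $a_2^{(j)}$ generic guarantees that the $w_2(w_1+1)$ new breakpoints are pairwise distinct and distinct from the $w_1$ old ones, and that no breakpoint is killed by cancellation. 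Counting then gives $w_1+w_2(w_1+1)=(w_1+1)(w_2+1)-1$ breakpoints, i.e.\ $(w_1+1)(w_2+1)$ pieces; and since Theorem~\ref{prop:ub_fn} caps any such two-hidden-layer network at $\prod_{i=1}^{2}(w_i+1)=(w_1+1)(w_2+1)$ pieces, this count is exactly attained.

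The part I expect to require the most care is the first-layer construction: exhibiting, for every $w_1\ge 3$, an explicit choice of hinge orientations, locations, and coefficients whose resulting $h$ is a perfect ``$(w_1+1)$-tooth'' zigzag sharing a common value band across all of its pieces, together with the genericity bookkeeping in the second layer that keeps the $(w_1+1)(w_2+1)-1$ breakpoints from coinciding or cancelling. Everything else is a direct application of Lemma~\ref{old_new} and Theorem~\ref{prop:ub_fn}.
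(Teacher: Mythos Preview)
Your proposal is correct and follows essentially the same approach as the paper: build a first-layer combination $h$ that is a $(w_1+1)$-piece zigzag sweeping a common value band, then feed it into $w_2$ second-layer neurons whose pre-activations are $\alpha_j h+\beta_j$ with distinct thresholds in that band (and with at least one sign of $\alpha_j$ each, so both peaks and valleys survive), and count via Lemma~\ref{old_new}. The paper supplies exactly the explicit first-layer construction you flag as the delicate step (e.g.\ $f_1^{(1)}=\sigma(3x)$, $f_1^{(2)}=\sigma(-x+3)$, $f_1^{(3)}=\sigma(\tfrac{3}{2}x-\tfrac{3}{2})$ for $w_1=3$, with additional left-facing hinges for larger $w_1$), and likewise uses one negated pre-activation $g_2^{(1)}=-(\cdots)$ to capture the ``other half'' of the old breakpoints.
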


\begin{proof}
\begin{figure}[h]
    \centering
    \includegraphics[width=\linewidth]{ 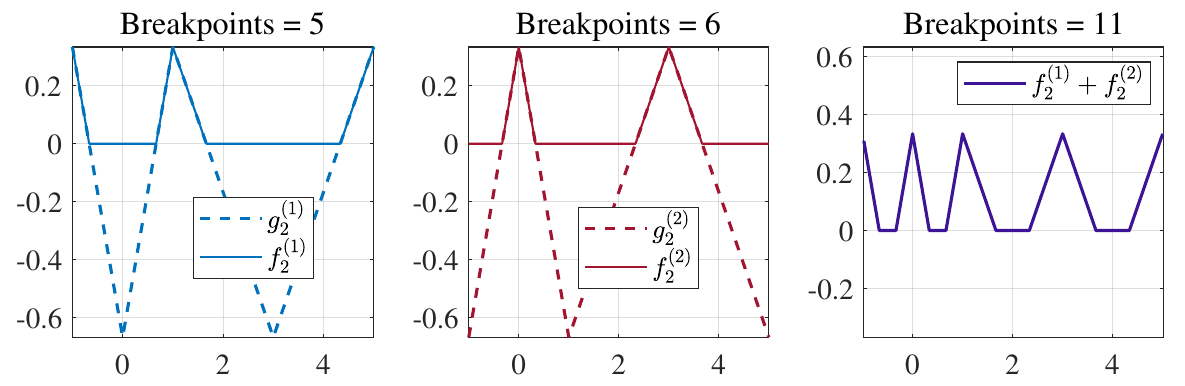}
    \caption{Construction of PWL functions to reach the bound of Proposition \ref{tight_bound_1} when $w_{1}=3$, $w_{2}=2$.}
    \label{fig:construction_feedforwar}
\end{figure}

To guarantee the bound $\prod_{k=1}^{K}\left(w_{k}+1\right)$ is tight, the following two requirements should be met: (i) each  $g_{k}^{(j)}$,  $k=0,1,2$, $j=1, \ldots, w_{k}$, has distinct zero points that are as many as its number of pieces, so that the activation step can produce the most new breakpoints; (ii) the breakpoints of each  $g_{(k+1)}^{(j)}$,  $k=0,1,2$, $j=1, \ldots, w_{k+1}$, as the affine combination of  $\left\{f_{k}^{(1)}, \ldots, f_{k}^{\left(w_{k}\right)}\right\}$, contains all the breakpoints of  $\left\{g_{k}^{(1)}, \ldots, g_{k}^{\left(w_{k}\right)}\right\}$, so that all the old breakpoints are reserved.

Now we give the proof in detail. Let $f_{1}^{(1)}(x)=\sigma(3 x),
f_{1}^{(2)}(x)=\sigma(-x+3),
f_{1}^{(3)}(x)=\sigma\left(\frac{3}{2} x-\frac{3}{2}\right)
$. When $w_{1}=3$, we set

$$\begin{array}{l}
g_{2}^{(1)}=-\left(f_{1}^{(1)}+f_{1}^{(2)}-f_{1}^{(3)}-3-\frac{1}{w_{2}+1}\right), \\
g_{2}^{(j)}=f_{1}^{(1)}+f_{1}^{(2)}-f_{1}^{(3)}-3-\frac{j}{w_{2}+1}, j=2, \ldots, w_{2} .
\end{array}$$

When $w_{1}>3$, we let  $f_{1}^{(j)}=\sigma(-2 x-2(j-3))$  and for $j=2, \ldots, w_{2}$,

\begin{equation}
\begin{aligned}
    & g_{2}^{(1)} \\
    = & -f_{1}^{(1)}-f_{1}^{(2)}+f_{1}^{(3)}-\sum_{j=4}^{w_{1}}(-1)^{j-1} f_{1}^{(j)}+3+\frac{1}{w_{2}+1},
\end{aligned}
\end{equation}

\begin{equation}
\begin{aligned}
& g_{2}^{(j)} \\
= & f_{1}^{(1)}+f_{1}^{(2)}-f_{1}^{(3)}+\sum_{r=4}^{w_{1}}(-1)^{r-1} f_{1}^{(r)}-3-\frac{j}{w_{2}+1}.
\end{aligned}
\end{equation}

Then $g_{2}^{(j)}$  has  $w_{1}+1$ distinct zero points. Hence for $j=1, \ldots, w_{2}$, the breakpoints of  $f_{2}^{(j)}=\sigma\left(g_{2}^{(j)}\right)$ keeps all breakpoints of  $g_{2}^{(j)}$  and  yields $w_{1}+1$ new breakpoints. Note that $f_{2}^{(j)}$  and  $f_{2}^{(j)}$  do not share new breakpoints, and $f_{2}^{(1)}$  and  $f_{2}^{(2)}$  covers all the breakpoints of  $\left\{g_{2}^{(j)}\right\}_{j=1}^{w_{2}}$. Therefore, the total number of pieces via an affine combination of $f_{2}^{(1)}, \ldots, f_{2}^{\left(w_{2}\right)}$ is  $\left(w_{1}+1\right)\left(w_{2}+1\right)$ pieces.
\end{proof}

\begin{proposition}[The bound $\prod_{k=1}^{K}\left(w_{k}+1\right)$ is tight for a width-bounded but depth-unbounded 2D network]
\label{construct_4^k}
    Given an $\mathbb{R} \rightarrow \mathbb{R}$ ReLU network with width $w$ for the first layer and 3 for other layers, for any depth $K\geq 2$, there exists a PWL function represented by such a network, whose number of pieces is  $(w+1)\cdot 4^{K-1}$.
\end{proposition}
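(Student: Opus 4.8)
The plan is to realize $f$ as a composition of ``sawtooth'' maps: the first (width-$w$) layer produces a sawtooth with $w+1$ teeth, each of the remaining $k-1$ layers (width $3$) composes the current function with a fixed sawtooth having $4$ teeth, and the number of pieces is then read off from the elementary fact that composing full sawtooths multiplies the number of pieces. Call a continuous PWL map $g:[0,1]\to[0,1]$ a \emph{$P$-sawtooth} if it has $P$ linear pieces, each monotone and mapping onto all of $[0,1]$ (so $g$ alternates between the values $0$ and $1$ at its $P-1$ breakpoints and at $0,1$). The composition rule I would record first is: if $g$ is a $P$-sawtooth and $S$ a $Q$-sawtooth, then $S\circ g$ is a $PQ$-sawtooth. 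Indeed, on each of the $P$ pieces of $g$ the map $g$ is an affine bijection onto $[0,1]$, so $S\circ g$ there is $S$ reparametrized and contributes $Q-1$ breakpoints; and each of the $P-1$ breakpoints of $g$, where $g$ equals $0$ or $1$, survives in $S\circ g$ because $S$ has a nonzero one-sided slope at $0$ and at $1$ (using Lemma~\ref{old_new} type reasoning). This gives $(P-1)+P(Q-1)=PQ-1$ breakpoints, i.e.\ $PQ$ pieces, and the result is again a sawtooth.

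Next I would build the width-$3$ multiplier. A single ReLU layer of width $3$ outputs a scalar of the form $\sum_{l=1}^{3}c_l\,\sigma(\alpha_l u+\beta_l)+c$ — crucially with \emph{no free linear term in $u$} — so it has at most $3$ breakpoints, and to reach a genuine $4$-sawtooth one must use units of both orientations so that all three breakpoints fall strictly inside $(0,1)$ while the successive slopes still alternate with the right signs. Such a map exists; for concreteness one may take
\[
S(u)=-12\,\sigma\!\left(u-\tfrac1{10}\right)+7\,\sigma\!\left(u-\tfrac35\right)-10\,\sigma\!\left(\tfrac45-u\right)+8 ,
\]
whose breakpoints are $\tfrac1{10}<\tfrac35<\tfrac45$, whose values at $0,\tfrac1{10},\tfrac35,\tfrac45,1$ are $0,1,0,1,0$, and whose slopes on the four subintervals are $10,-2,5,-5$; a direct check confirms $S$ is a $4$-sawtooth (the additive constant is folded into the bias of the next layer). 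For the first layer I would likewise realize, using its $w$ units with a suitable mix of increasing and decreasing units, a map $h_1$ whose restriction to $[0,1]$ is a $(w+1)$-sawtooth; this is possible precisely because $w\ge 3$ — with only two units the slopes produced at the three candidate pieces cannot close up into a surjective sawtooth — and it is the same mechanism already used in the first layer of Proposition~\ref{twoproduct}. (When $w=3$ one may simply take $h_1=S$.)

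Then I would assemble the network. Writing $h_1=\sum_{j}\lambda_j f_1^{(j)}+\mu$ as an affine combination of the first layer's outputs, every affine function $\alpha h_1+\beta$ is again such a combination, so the three pre-activations of layer $2$ may be taken to equal $\alpha_l h_1+\beta_l$, its outputs are $\sigma(\alpha_l h_1+\beta_l)$, and its scalar output can be made equal to $S(h_1)$; iterating, each layer's pre-activations are affine in the scalar passed up from the previous layer, so the network with $k$ hidden layers of widths $w,3,\dots,3$ represents $f:=S^{\circ(k-1)}\circ h_1$. By the composition rule, $f$ restricted to $[0,1]$ is a $\bigl((w+1)\cdot 4^{k-1}\bigr)$-sawtooth, hence $f$ has at least $(w+1)\cdot 4^{k-1}$ pieces on $\mathbb{R}$; since Theorem~\ref{prop:ub_fn} bounds this network's pieces by $(w+1)\prod_{i=2}^{k}4=(w+1)\cdot 4^{k-1}$, the count is exact.

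The main obstacle is the width-$3$ multiplier. Because a single width-$3$ layer yields only $\sum_{l=1}^3 c_l\sigma(\cdot)+\text{const}$, spending one unit as a quasi-linear term (e.g.\ $\sigma(u)=u$ on $[0,1]$) leaves only two interior breakpoints and hence a factor $3$ rather than $4$; keeping the factor $4$ forces the non-monotone, mixed-orientation choice exhibited above, and checking that such a choice — and, for $w\ge 4$, its $(w+1)$-tooth analogue in the first layer — actually exists with the alternating slopes required is the one point that needs care. Everything else (the composition rule and the affine-combination passing) is routine bookkeeping.
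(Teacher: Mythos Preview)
Your proposal is correct and follows essentially the same strategy as the paper: realize the first layer as a $(w+1)$-piece oscillation and then, at each subsequent width-$3$ layer, compose with a map that quadruples the number of pieces. Your explicit fixed $4$-sawtooth $S:[0,1]\to[0,1]$ (which I checked) is in fact a cleaner realization of the multiplier than the paper's layer-dependent rescaled version, and you correctly identify the one residual technical point --- the mixed-orientation $(w+1)$-sawtooth for the first layer when $w\ge 4$ --- which the paper likewise handles by deferring to its Proposition~\ref{twoproduct} construction.
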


\begin{proof}
Let  $f_{1}^{(1)}, \ldots f_{1}^{(w)}$  be the same as in Proposition \ref{tight_bound_1}. Let
$$
\tilde{g}_{2}=\left\{\begin{array}{c}
f_{1}^{(1)}+f_{1}^{(2)}-f_{1}^{(3)}-3, \quad \quad \quad \quad \quad ~~~\text { if } w=3, \\
f_{1}^{(1)}+f_{1}^{(2)}-f_{1}^{(3)}+\sum_{j=4}^{w}(-1)^{j-1} f_{1}^{(j)}-3, ~ \text{if}~ w>3 .
\end{array}\right.
$$
We set
$$
\begin{array}{l}
f_{2}^{(1)}=\sigma\left(2 \tilde{g}_{2}-\frac{1}{3}\right), \\
f_{2}^{(2)}=\sigma\left(-\tilde{g}_{2}+\frac{2}{3}\right), \\
f_{2}^{(3)}=\sigma\left(\frac{3}{2} \tilde{g}_{2}-\frac{1}{2}\right) .
\end{array}
$$
Now we continue our proof by induction. Assume that we have constructed $f_{k}^{(1)}$, $f_{k}^{(2)}$  and $ f_{k}^{(3)}$, $k \geq 2$, then we set
$$
\tilde{g}_{k+1}=f_{k}^{(1)}+f_{k}^{(2)}-f_{k}^{(3)}-\frac{3}{6^{k}}
$$
and
$$
\begin{array}{l}
f_{k+1}^{(1)}=\sigma\left(2 \tilde{g}_{k+1}-\frac{2}{6^{k}}\right), \\
f_{k+1}^{(2)}=\sigma\left(-\tilde{g}_{k+1}-\frac{4}{6^{k}}\right), \\
f_{k+1}^{(3)}=\sigma\left(\tilde{g}_{k+1}+\frac{3}{6^{k}}\right) .
\end{array}
$$
Through a direct calculus, we know $\tilde{g}_{k+1}$ has $(w+1) \cdot 4^{k-1}$ pieces with opposite slopes in every two adjoint pieces and ranges from $0$ to  $3 / 6^{k}$  in each piece except the leftmost and rightmost pieces, which implies we can obtain a total of $(w+1) \cdot 4^{K-1}$ pieces.
\end{proof}

\begin{proposition}[The bound $\left((2^H-1)W+1\right)^K$ is tight for $\mathcal{N}_{W,2,2}$]
\label{tight_bound_2}
Given an $\mathbb{R} \rightarrow \mathbb{R}$ two-hidden-layer vertically uniform ReLU network, $\mathcal{N}_{W,2,2}$, for any even $W \geq 2$, there exists a PWL function represented by such a network, whose number of pieces is $\left(3 W+1\right)^2$.
\end{proposition}

\begin{proof}
Since $H=2$, based on our setting, all odd neurons lie on the first floor of a layer, while all even neurons lie on the second floor of a layer. 
To guarantee the bound $\prod_{k=1}^{K}\left(3w_{k}+1\right)$ is tight, the following two conditions should be satisfied: (i) $\tilde{g}_{k}^{(2j-1)}$ and  $\tilde{g}_{k}^{(2j)}-\tilde{f}_{k}^{(2j-1)}$ have as many zero points as possible so that $\sigma(\tilde{g}_{k}^{(2j-1)})$ and $\sigma(\tilde{g}_{k}^{(2j)}-\tilde{f}_{k}^{(2j-1)})$ can produce the maximal number of breakpoints; (ii) 
all old breakpoints of $\left\{\tilde{g}_{k}^{(1)}, \ldots, \tilde{g}_{k}^{\left(2w_{k}-1\right)}\right\}$ are reserved by $\tilde{g}_{k+1}^{(2j-1)}$, an affine transform of $\left\{\tilde{f}_{k}^{(1)}, \ldots, \tilde{f}_{k}^{\left(2w_{k}-1\right)}\right\}$. 

We first consider the first hidden layer. Let
$$
\begin{array}{l}
\tilde{f}_{1}^{(1)}(x)=\sigma\left(\frac{9}{2} x-27\right),
~\tilde{f}_{1}^{(2)}(x)=\sigma\left(\frac{3}{2} x-\tilde{f}_{1}^{(1)}(x)\right) \\
\tilde{f}_{1}^{(3)}(x)=\sigma(-2 x+2),
~~\tilde{f}_{1}^{(4)}(x)=\sigma\left(-x+2-\tilde{f}_{1}^{(3)}(x)\right) \\
\tilde{f}_{1}^{(5)}(x)=\sigma\left(-\frac{7}{2} x-\frac{7}{4}\right),
~\tilde{f}_{1}^{(6)}(x)=\sigma\left(-2 x+8-\tilde{f}_{1}^{(5)}(x)\right).
\end{array}
$$
When  $w_{1}=6$, we set $\tilde{g}=-\frac{2}{9} \tilde{f}_{1}^{(1)}-\tilde{f}_{1}^{(2)}+\frac{1}{2} \tilde{f}_{1}^{(3)}+\tilde{f}_{1}^{(4)}-\frac{4}{7} \tilde{f}_{1}^{(5)}-\tilde{f}_{1}^{(6)} .$

When $w_{1}>6$, for $j\geq 4$,  let
$
\tilde{f}_{1}^{(2j-1)}=\sigma\left(-5\left(x-a_{2j-1}+3\right)\right),
\tilde{f}_{1}^{(2j)}=\sigma\left(-2\left(x-a_{2j-1}\right)-\tilde{f}_{1}^{(2j-1)}\right),
$
where $a_{2j-1}=-\frac{19}{2}-9\left(\frac{2(j-1)}{2}-3\right)$, then the output of the first layer is expressed as the following:
$\tilde{g}=-\frac{2}{9} \tilde{f}_{1}^{(1)}-\tilde{f}_{1}^{(2)}+\frac{1}{2} \tilde{f}_{1}^{(3)}+\tilde{f}_{1}^{(4)}-\frac{4}{7} \tilde{f}_{1}^{(5)}-\tilde{f}_{1}^{(6)}+\sum_{j=4}^{w_{2}}(-1)^{\frac{2j-1}{2}}\left(\frac{2}{5} f_{1}^{(2j-1)}+f_{1}^{(2j)}\right),
$
which has $\frac{3}{2} w_{1}+1$ pieces and whose adjacent pieces have slopes of opposite signs. Note that any line $y=b$, where $b \in (-13 / 2, -6)$, can cross all pieces of $\tilde{g}+b$. Thus, $g$ fulfills the conditions of Lemma \ref{linked-struct}. We divide the breakpoints of $\tilde{g}$ into two parts: $B_{upper}=\left\{ x:x \text{ is a breakpoint of } \tilde{g} \text{ and } \tilde{g}(x)>b \right\}$ and $B_{lower}=\left\{ x:x \text{ is a breakpoint of } \tilde{g} \text{ and } \tilde{g}(x)\leq b \right\}$. We refer to their counts as \#$B_{upper}$ and \#$B_{lower}$.

\begin{figure}[h]
\vspace{-0.2cm}
    \centering
    \includegraphics[width=\linewidth]{ 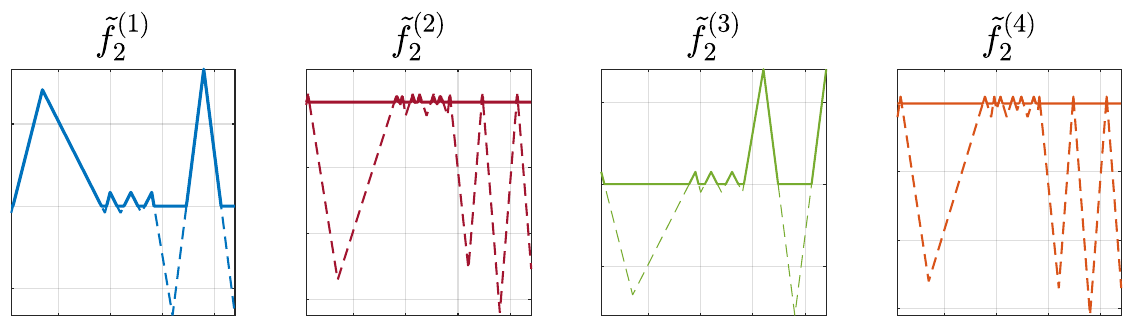}
    \caption{The PWL functions that reach the bound of Proposition \ref{tight_bound_2} when $w_{1}=6$, $w_{2}=4$.}
    \label{fig:construction_inta}
    \vspace{-0.3cm}
\end{figure}

Next, we construct the second hidden layer. $\tilde{f}_{2}^{(1)}:=\sigma\left(\tilde{g}+b_1\right)$, where $b_1 \in (-13/2,-6)$, has $3 w_{1}+ 
1$ new breakpoints. Then by choosing some scaling parameter  $a \in(0,1)$ bias $b_{2}$ to fulfill Lemma \ref{linked-struct}, we can also make   $a\tilde{g}+b_{2}-\tilde{f}_{2}^{(1)}$ has $3 w_{1}+2$ distinct zero-points, which implies  $\tilde{f}_{2}^{(2)}:=\sigma\left(a \tilde{g}+b_{2}-\tilde{f}_{2}^{(1)}\right)$ has  $3 w_{1}+2$  newly produced breakpoints. Therefore, the affine combination of $\tilde{f}_{2}^{(1)}$ and  $\tilde{f}_{2}^{(2)}$ contains all breakpoints of $B_{upper}$, and has  \#$B_{upper}+\left(3 w_{1}+1\right)+\left(3 w_{1}+2\right)$ breakpoints. To reserve all the breakpoints of $\tilde{g}$, we do the similar thing for $-\tilde{g}$ to gain $\tilde{f}^{(3)}_{2}$ and $\tilde{f}^{(4)}_{2}$, whose affine combination has \#$ B_{lower}+\left(3 w_{1}+1\right)+2\cdot\left(3 w_{1}+2\right)$ breakpoints, which contains all breakpoints in $B_{lower}$, and shares no breakpoints with the affine combination of $\left\{\tilde{f}^{(1)}_{2},\tilde{f}^{(2)}_{2} \right\} $.  

Hence, the affine combination of $\left\{\tilde{f}^{(1)}_{2},\tilde{f}^{(2)}_{2},\tilde{f}^{(3)}_{2},\tilde{f}^{(4)}_{2} \right\} $ has  
\#$B_{upper}+$ \# $B_{lower}+ 2\cdot\left ( 3w_{1}+1\right )+2\cdot\left (6w_{1}+2\right )=\left ( 3w_{1} \right )+ 4\cdot\left ( 3w_{1}+1\right )$
breaking points, which contains all the breakpoints of $\tilde{g}$. $\left\{\tilde{f}^{(1)}_{2},\tilde{f}^{(2)}_{2},\tilde{f}^{(3)}_{2},\tilde{f}^{(4)}_{2} \right\}$ are visualized in Figure \ref{fig:construction_inta}. Repeating this procedure by selecting different $b_1, a, b_2$, we can construct the remaining $\{\tilde{f}_{2}^{(i)}\}_{i=5}^{w_2}$
 such that the affine transformation of $\{\tilde{f}_{2}^{(i)}\}_{i=1}^{w_2}$ has pieces of $3w_{1}+3w_{2} \cdot\left(3w_{1}+1\right)+1=\left(3 w_{1}+1\right)\left(3 w_{2}+1\right)$.
\end{proof}

\begin{proposition}[Use 3D networks to achieve $\prod_{k=1}^{K}\left(3w_{k}\right)$ pieces]
\label{achieve 3w/2}
There exists a $[0,1] \rightarrow \mathbb{R}$ function represented by a vertically uniform 3D ReLU DNN with depth $K$ and height $2$, and the same width in each floor of a layer, $w_{1}, \ldots, w_{K}$ respectively, whose number of pieces is at least $3w_{1} \cdot \ldots \cdot 3 w_{K}$.
\end{proposition}

\begin{proof}
Let $\phi(x)=x$ defined over $[0, \Delta]$. The core of the proof is to use a one-hidden-layer network of $w\geq 2$ neurons and height=2 to create $3w$ pieces from $\phi(x)$. 

\begin{figure}[h]
    \centering
    \includegraphics[width=\linewidth]{ 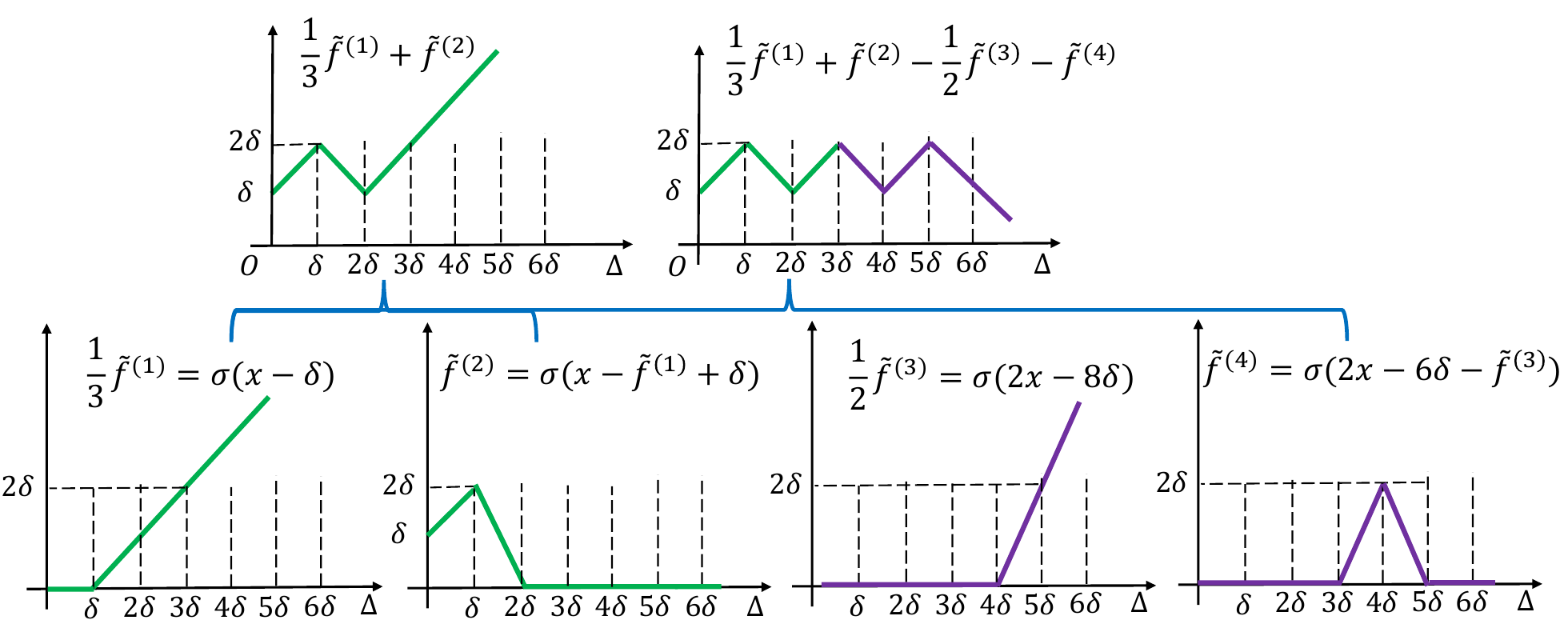}
    \vspace{-0.5cm}
    \caption{A schematic illustration of how to use an intra-linked network to generate a sawtooth function.}
    \label{fig:sawtoth}
    \vspace{-0.3cm}
\end{figure}

Let $\delta=\frac{2 \Delta}{3 w}$. Set $\tilde{g}^{(1)}=3 \phi-3 \delta$,  $\tilde{f}^{(1)}=\sigma\left(\tilde{g}^{(1)}\right)$,  $\tilde{g}^{(2)}=\phi$,  $\tilde{f}^{(2)}=\sigma\left(\tilde{g}^{(2)}-\tilde{f}^{(1)}+\delta\right)$, and $\tilde{g}^{(2 j+1)}=4\phi-4(3j+1)\delta$, $f^{(2 j+1)}=\sigma\left(\tilde{g}^{(2 j+1)}\right)$, $\tilde{g}^{(2 j+2)}=2\phi-6 j \delta$, $\tilde{f}^{(2 j+2)}=\sigma\left(\tilde{g}^{(2 j+2)}-\tilde{f}^{(2 j+1)}\right)$, for all  $j=1,\ldots, w-1 $. The output of this one-hidden-layer network is
$
\xi_{\Delta, w}(x)=\frac{1}{3} \tilde{f}^{(1)}+\tilde{f}^{(2)}-\delta+\sum_{j=1}^{w-1}(-1)^{j}\left(\frac{1}{2} \tilde{f}^{(2 j+1)}+\tilde{f}^{(2 j+2)}\right),
$
which has $3w$ pieces on  $[0, \Delta]$. $\xi_{\Delta, w}(x)$ is of slope  $(-1)^{j}$  on  $[j \delta,(j+1) \delta]$, $j=0, \ldots, 3 w/2-1$, and ranges from $0$ to  $\delta$ on each piece. Figure \ref{fig:sawtoth} shows how the affine transformation of $\{\tilde{f}^{(1)},\tilde{f}^{(2)},\tilde{f}^{(3)},\tilde{f}^{(4)} \}$ constructs a sawtooth function of 6 pieces.
Please note that flipping $\phi(x)$ or translating $\phi(x)$ will not prevent $\xi_{\Delta, w}(\phi(x))$ from generating $3w$ pieces.

The targeted intra-linked ReLU network with depth  $K$, height $2$  and width  $w_{1}, \ldots, w_{K}$ of $K$ hidden layers is designed as $\xi_{\Delta_K, w_K}\circ \xi_{\Delta_{K-1}, w_{K-1}}\circ\cdots \circ \xi_{\Delta_1, w_1}(x)$,
where $\Delta_k = 1/\Big(\prod_{k=1}^{K-1} 3 w_{k}\Big)$.
\end{proof}

\begin{proposition}[3D networks $\mathcal{N}_{1,K,2}^3$ can greatly increase the number of pieces compared to 2D networks with $\mathcal{N}_{2,K}^2$]
\label{sqrt7}
Let  $f: \mathbb{R} \rightarrow \mathbb{R}$  be a PWL function represented by an $\mathbb{R} \rightarrow \mathbb{R}$ $(K+1)$-layer ReLU DNN with widths $2$ of all $K$  hidden layers. Then the number of pieces of $f$ is at most
$\left\{\begin{array}{cc}
\sqrt{7}^{K}, & \text { if } K \text { is even, } \\
3 \cdot \sqrt{7}^{K-1}, & \text { if } K \text { is odd. }
\end{array}\right.
$
There exists an $\mathbb{R} \rightarrow \mathbb{R}$ $K$-hidden-layer horizontally and vertically uniform ReLU DNN of width $1$ and height $2$, which can produce at least $7 \cdot 3^{K-2}+2$ pieces.
\end{proposition}

\begin{proof}
For the first assertion, we claim that each pre-activation $g_{k}^{(j)}$, $2\leq k\leq K$, $j=1,2$, cannot make its two adjacent pieces have slopes with different signs, which implies the pre-activation cannot produce the most breakpoints as in Lemma \ref{old_new}. In fact,  $g_{2}^{(j)}$, $j=1,2$, has at most $3$ pieces. If some  $g_{2}^{(j)}$ has $3$ pieces, then by enumeration, we know either it has a $0$ slope, or it has two adjacent pieces with slopes of the same sign (see Figure \ref{fig:manifold}). Hence,  $f_{2}^{(j)}$, $j=1,2$, has at most $2$ new breakpoints. Then the output of the second layer has at most $2+2 \times 2=6$  breakpoints and $7$ pieces. Applying a similar method to each piece, we can finish the proof via a simple induction step.

Now we come to the second assertion. For convenience, we say an $\mathbb{R} \rightarrow \mathbb{R}$ PWL function $f$  is of “triangle-trapezoid-triangle" shape on $[a, b] \subset \mathbb{R}$, if there exists a partition of $[a, b]: a<x_{1}<x_{2}<\cdots<x_{6}<b$  and a positive constant  $c$, such that

$$f(x)=\left\{\begin{array}{cc}
c, & ~~~~~~~~~~\text {if } x=a, x_{2}, x_{5}, b \\
-c, & ~~\text {if } x=x_{1}, x_{6} \\
-3 c, & ~~~~\text {if } x \in\left[x_{3}, x_{4}\right] \\
\text {linear connection, } & \text {otherwise. }
\end{array}\right.$$

Given a PWL function  $f: \mathbb{R} \rightarrow \mathbb{R}$  of “triangle-trapezoid-triangle" shape on  $[a, b]$, with a partition  $a<x_{1}<x_{2}<\cdots<x_{6}<b$ and  $f(a)=c>0$, if we set
$$
\begin{array}{l}
g^{(1)}=4 f, \\
g^{(2)}=2 f-\frac{3 c}{2}, \\
f^{(1)}=\sigma\left(g^{(1)}\right), \\
f^{(2)}=\sigma\left(g^{(2)}-f^{(1)}\right),
\end{array}
$$
then  $g=-\frac{1}{4} f^{(1)}+f^{(2)}+\frac{c}{8}$  is of “triangle-trapezoid-triangle" shape on  $\left[a, x_{2}\right],\left[x_{2}, x_{5}\right]$,  and  $\left[x_{5}, b\right]$, respectively.

Using this fact, we can construct a  PWL  function represented by a  $K$-layer 1-neuron-wide, and 2-neuron-tall 3D ReLU DNN, which has  $7 \cdot 3^{K-2}+2$  pieces. Actually, if we set
$$
\begin{array}{l}
\tilde{f}_{1}^{(1)}=\sigma(2 x), \\
\tilde{f}_{1}^{(2)}=\sigma\left(x-\tilde{f}_{1}^{(1)}+1\right), \\
\tilde{g}_{2}^{(1)}=-4 \tilde{f}_{1}^{(2)}+2, \\
\tilde{g}_{2}^{(2)}=-2 \tilde{f}_{1}^{(2)}+\frac{3}{2},
\end{array}
$$
then through a direct calculus,  $\frac{1}{4} \tilde{f}_{2}^{(1)}+\tilde{f}_{2}^{(2)}-\frac{3}{8}$  is of “triangle-trapezoid-triangle" shape on  $[-1,1]$. Using the fact above repeatedly, we can construct a  PWL  function represented by an  $\mathbb{R} \rightarrow \mathbb{R}$ $K$-layer, $1$-wide, 2-neuron-tall 3D ReLU DNN, which is constant on  $(-\infty,-1] \cup   [1, \infty)$  and of “triangle-trapezoid-triangle" shape on  $\left[-1+\frac{2 n}{3^{K-2}},-1+\frac{2(n+1)}{3^{K-2}}\right]$, $n=0, \ldots, 3^{K-2}-1$. 

\begin{figure}[h]
    \centering
\includegraphics[width=0.6\linewidth]{ 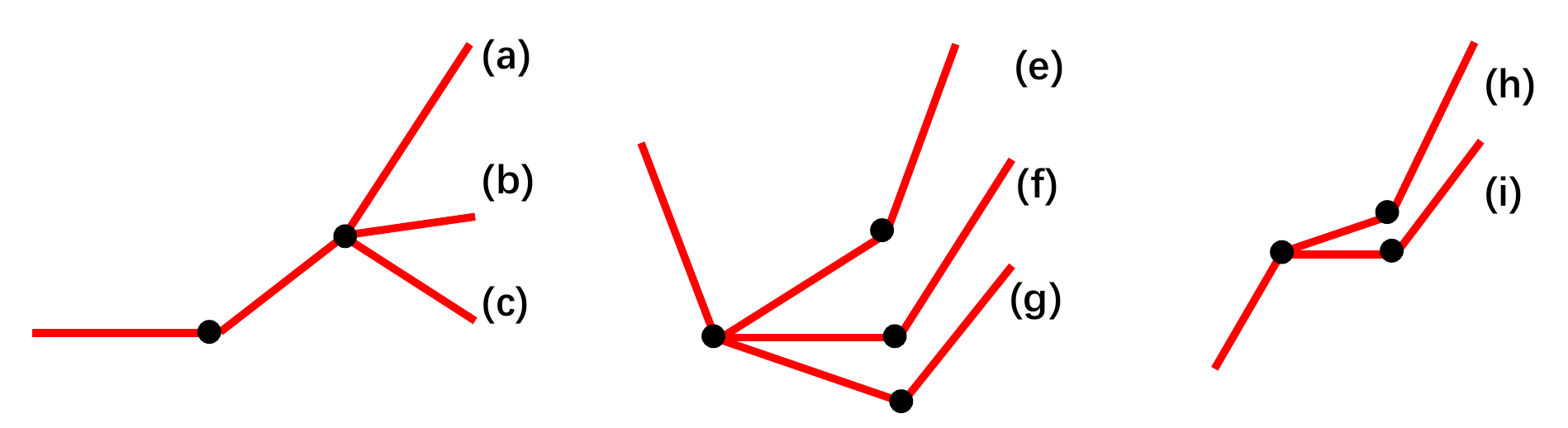}
    \caption{Enumerating all possible shapes of $g_{2}^{(j)}$,$j=1,2$ in Proposition \ref{sqrt7}.}
    \label{fig:manifold}
\end{figure}
\end{proof}

\begin{proposition}[ $\left(\frac{(H+1)H}{2}+1\right)$ pieces for a network $\mathcal{N}_{1,1,H}^3$]
Given an $\mathbb{R} \rightarrow \mathbb{R}$ 3D ReLU network, $\mathcal{N}_{1,1,H}$, there exists a PWL function represented by such a network, whose number of pieces is $\frac{(H+1)H}{2}+1$.
\label{theorem_n_linked_1_layer}
\end{proposition}

\begin{proof}
Without loss of generality, a one-hidden-layer, one-neuron-wide, and $H$-neuron-tall 3D network $\mathcal{N}_{1,1,H}$ is mathematically formulated as the following:
\begin{equation}
    \begin{cases}
        & \tilde{f}^{(1)} = \sigma(w^{(1)} x+b^{(1)})  \\
        & \tilde{f}^{(j+1)} = \sigma(w^{(j)} x+b^{(j)}-\tilde{f}^{(j)}) \\
    \end{cases}.
\end{equation}

\begin{figure}[h]
    \centering
    \includegraphics[width=\linewidth]{ 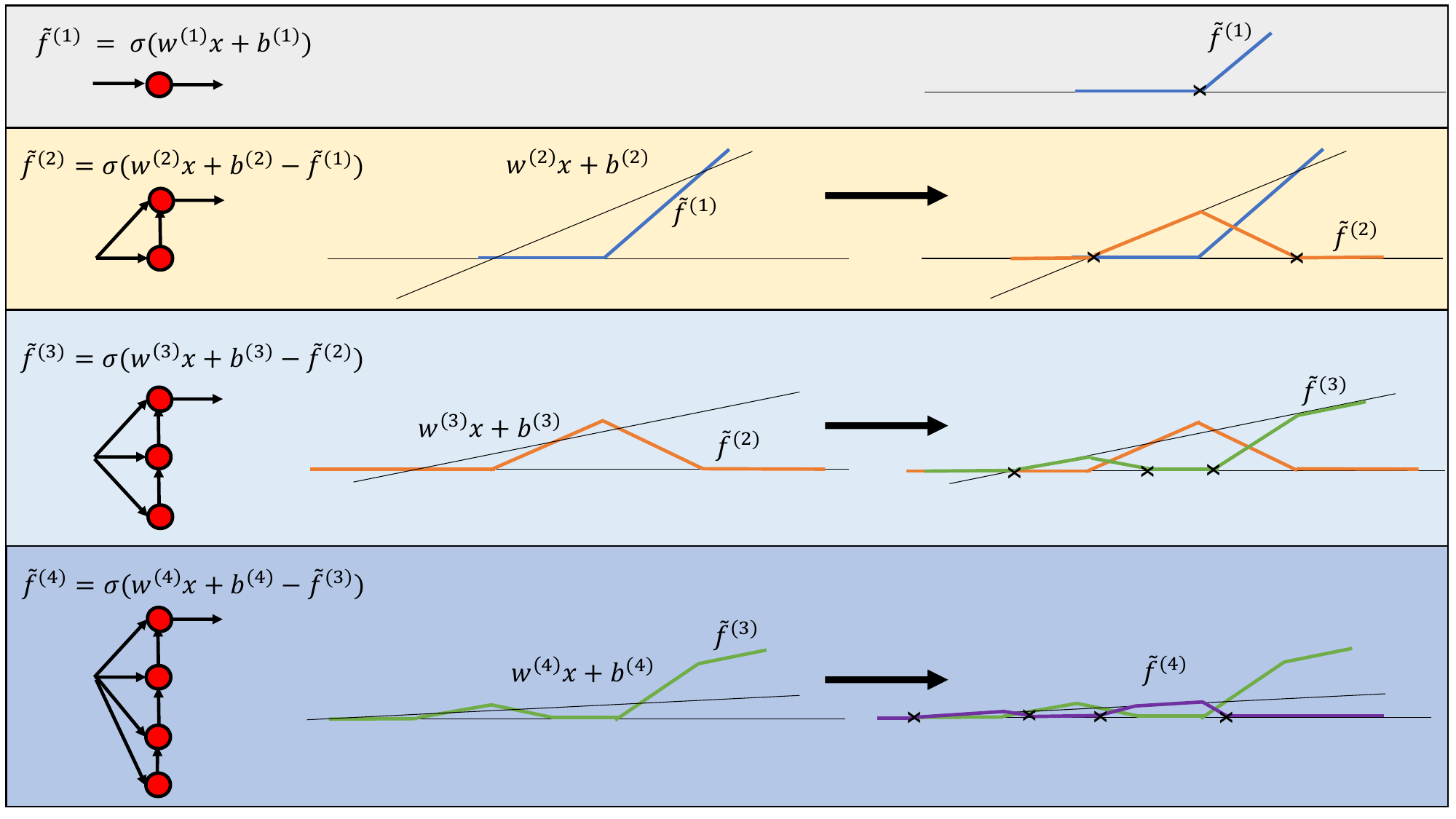}
    \caption{The construction demonstrating that the bound $\prod_{h=1}^{H}\left(\frac{H(H+1)}{2}+1\right)$ is tight for $\mathcal{N}_{1,1,H}$.}
\label{fig:n_links_construction}
    \vspace{-0.3cm}
\end{figure}

To prove that the bound $\prod_{h=1}^{H}\left(\frac{(H+1)H}{2}+1\right)$ is tight for a one-hidden-layer network, the key is to make each $\tilde{f}^{(j)}$ produce $j$ new breakpoints and have $j$ non-zero pieces that share a point with $y=0$. 
We use mathematical induction to derive our construction. Figure \ref{fig:n_links_construction} schematically illustrates the key idea in our construction.

First, let $\tilde{f}^{(1)}=\sigma(x+1)$ and $\tilde{f}^{(2)}=\sigma(0.5\times(x+2)-\tilde{f}^{(1)})$. Note that $\tilde{f}^{(1)}$ has $1$ non-zero piece that shares a point with $y=0$, and $\tilde{f}^{(2)}$ has $2$ non-zero pieces that share a common point with $y=0$. 

Then, given $\tilde{f}^{(j)}, j\geq 2$, we suppose $\tilde{f}^{(j)}$ has $j$ non-zero pieces that share a point with $y=0$. Since $\tilde{f}^{(j)}$ is continuous, we select its peaks $\{(x_{p_i}, \tilde{f}^{(j)}(x_{p_i}))\}$ by the following conditions: i) $\tilde{f}^{(j)}$ is not differentiable at $x_{p_i}$; ii) $\tilde{f}^{(j)}(x_{p_i}) \neq 0$. Next, let $(x^*, \tilde{f}^{(j)}(x^*))$ be the lowest peak of $\tilde{f}^{(j)}$. As long as the slope $w^{(j+1)}$ and the bias $b^{(j+1)}$ satisfy
\begin{equation}
\begin{cases}
        & w^{(j+1)} <\frac{\tilde{f}_{1}^{(j)}(x^*)}{x^*+j+1} \\
        & b^{(j+1)} = w_{j+1}\times (j+1)
\end{cases},
\end{equation}
$w^{(j+1)}x+b^{(j+1)}$ crosses and only crosses $j$ pieces of $\tilde{f}^{(j)}$. These pieces are exactly non-zero pieces that share a point with $y=0$. Thus, plus the breakpoint $-\frac{b^{(j+1)}}{w^{(j+1)}}$, $\tilde{f}^{(j+1)}$ generates a total of $j+1$ new breakpoints. At the same time, $\tilde{f}^{(j+1)}$ has $j+1$ non-zero pieces that share a point with $y=0$. Figure \ref{fig:n_links_construction} illustrates the process of induction.

Finally, the total number of breakpoints is $\sum_{j=1}^{H} j=\frac{(H+1)H}{2}$, which concludes our proof.

\end{proof}


\begin{proposition}[An arbitrarily deep 3D network $\mathcal{N}_{1,K,4}$ of width=1, height=4 and each floor of a layer intra-linked based on $\mathbf{G}_{h-1,h}^k\neq 0$ for $h=2,3,4$, can achieve at least $9^K$ pieces]
There exists an $\mathbb{R} \rightarrow \mathbb{R}$ function represented by a horizontally and vertically uniform 3D ReLU DNN $\mathcal{N}_{1,K,4}$ with width $1$, height $4$, and each floor of a layer intra-linked based on $\mathbf{G}_{h-1,h}^k\neq 0$ for $h=2,3,4$, whose number of pieces is at least $9^K$.
\label{theorem_n_linked_lb_1}
\end{proposition}

\begin{proof}
The core of the proof is to use a one-hidden-layer, one-neuron-wide, and $4$-neuron-tall 3D network $\mathcal{N}_{1,1,4}$ to create a quasi-sawtooth function with as many pieces as possible. We construct four neurons as follows:
\begin{equation}
    \begin{cases}
        & \tilde{f}^{(1)} = \sigma(2x)  \\
        & \tilde{f}^{(2)} = \sigma(x+1-\sigma(\tilde{f}^{(1)})) \\
        & \tilde{f}^{(3)} = \sigma(\frac{1}{3}(x+2)-\tilde{f}^{(2)}) \\
        & \tilde{f}^{(4)} = \sigma(\frac{1}{9}(x+3)-\tilde{f}^{(3)})
    \end{cases}.
\end{equation}
The profiles of $\tilde{f}^{(1)},\tilde{f}^{(2)},\tilde{f}^{(3)},\tilde{f}^{(4)}$ are shown in Figure \ref{fig:n_quasi_sawtoth}(a).  

\begin{figure}[h]
    \centering
    \includegraphics[width=\linewidth]{ 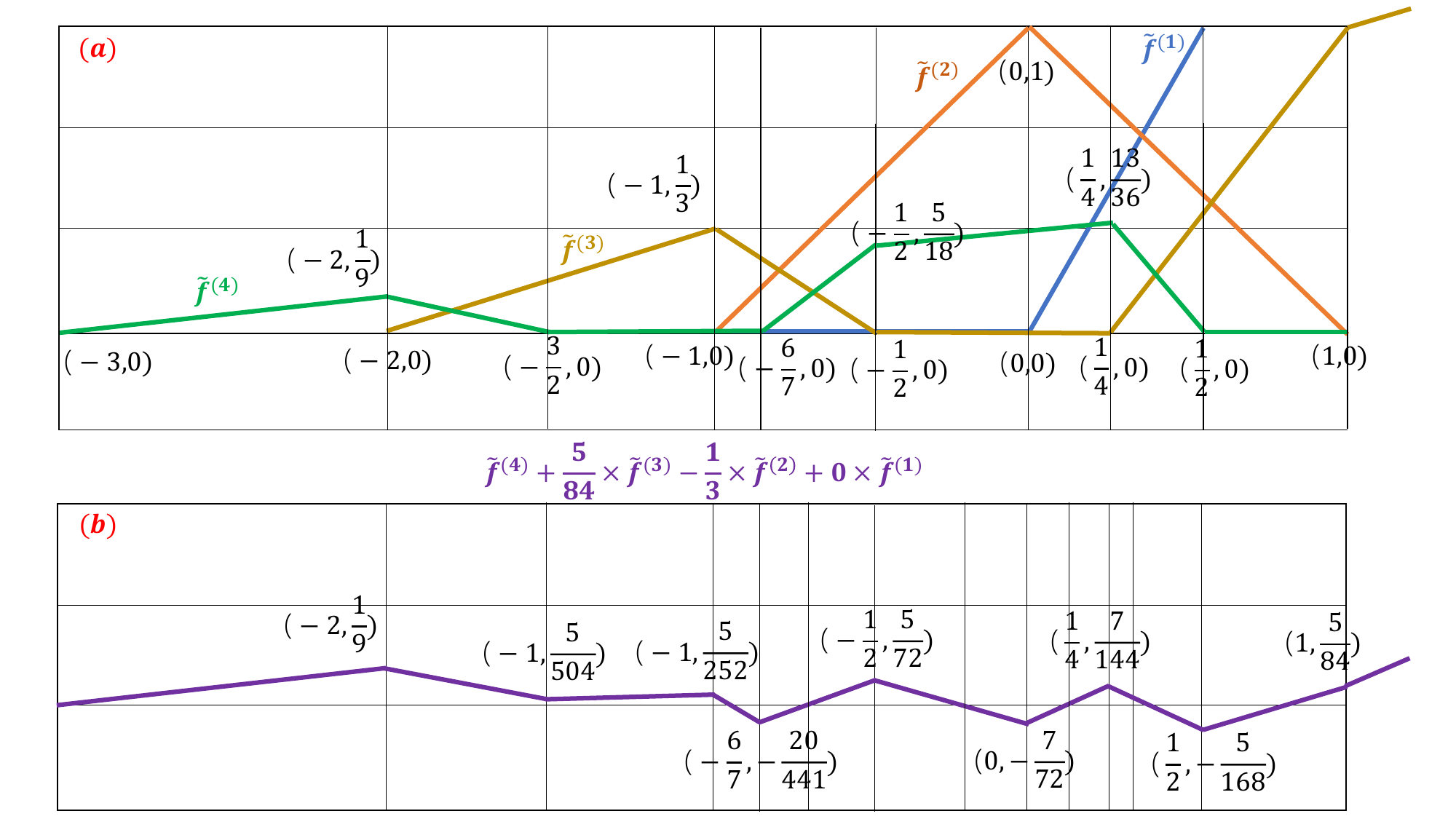}
    \caption{A schematic illustration of how to use an intra-linked network to generate a sawtooth function.}
    \label{fig:n_quasi_sawtoth}
    \vspace{-0.3cm}
\end{figure}

By combining $\tilde{f}^{(1)},\tilde{f}^{(2)},\tilde{f}^{(3)},\tilde{f}^{(4)}$ with carefully calibrated coefficients, we have the following quasi-sawtooth function that has $9$ pieces are 
\begin{equation}
    \eta(x)=\tilde{f}^{(4)}+\frac{5}{84}\times \tilde{f}^{(3)}-\frac{1}{3}\times \tilde{f}^{(2)}+0\times  \tilde{f}^{(1)}.
\end{equation}
As shown in Figure \ref{fig:n_quasi_sawtoth}(b), we have marked all breakpoints of $\eta(x)$ to validate its correctness.

Next, we just need to let each layer of the intra-linked network represent a stretched and down-pulled variant of $\eta(x)$, \textit{e.g.}, the $k$-th layer $\eta_k(x)=M_k\cdot\eta(x)-B_k$, where $M_k$ is a sufficiently large number and $B_k>\frac{5}{504}M_k+3$ to ensure that $[-3,0]$ is within the function range of $\eta_k(x)$.

Finally, the constructed network is 
\begin{equation}
    \eta_{K}\circ \eta_{K-1}\circ\cdots \circ \eta_{1}(x).
\end{equation}

\end{proof}

\section{Height Can Greatly Improve The Approximation Rate}

The above compares the number of pieces generated by 2D and 3D networks, which serves as side evidence for the stronger approximation power of height. 
Here, we would like to compare the universal approximation error for a general class of functions, \textit{i.e.}, polynomials, which is the main-stream measurement of the approximation power of a network \cite{yarotsky2017error,liang2016deep}. In the error bounds of existing results, only the depth appears in the exponent part while the width is at the base. In this work, \textbf{by introducing intra-layer links into networks, we construct ReLU FNNs with width $\mathcal{O}(W)$ and depth $\mathcal{O}(K)$ can approximate polynomials in $[0,1]^d$ with error $\mathcal{O}\left(2^{-2WK}\right)$, which is a non-trivial extension of the result $\mathcal{O}\left(W^{-K}\right)$ in \cite{shen2019deep} and $\mathcal{O}\left(2^{-K}\right)$ in \cite{yarotsky2017error} for fixed width networks}. From the perspective of height, such a network is of width $\mathcal{O}(W)$, depth $\mathcal{O}(K)$, and height $\mathcal{O}(W)$. This also means that without increasing the number of parameters, 2D-3D transformation can bring great improvement.

Here, we construct a 3D ReLU network to approximate an arbitrary polynomial. The main steps are summarized below:
\begin{itemize}
    \item \textbf{Construction of sawtooth functions}: We construct ReLU FNNs to implement a series of sawtooth functions.
    \item \textbf{Approximation of product $\times(x,y)=xy:[0,1]^2\to [0,1]$}: It suffices to approximate the squaring function $f(x)=x^2$ since $xy=\frac{(x+y)^2-x^2-y^2}{2}$, which is given by the composition and combination of sawtooth functions.
    \item \textbf{Approximation of polynomials}: We extend the bivariate product $\hat{\times}_2=\hat{\times}(x,y)\approx xy$ to multivariate case $\hat{\times}_d:[0,1]^d$ by recursively define $\hat{\times}_{k}(x_1,\ldots,x_k)=\hat{\times}\left( \hat{\times}_{k-1}(x_1,\ldots,x_{k-1}),x_k \right)$. Notably, we slightly modify the above product to $\tilde{\times}_d$  for all $\boldsymbol{x}\in Q=[-1,1]^d$ while preserving $\tilde{\times}(\boldsymbol{x})=0$ if one of $x_1,\ldots,x_d$ is zero.
\end{itemize}

\textbf{Step 1. Construction of Sawtooth Functions.} We first show how to use a 3D ReLU network to efficiently construct the sawtooth function.

\begin{theorem}\label{Sawtooth}
    Let $g_1: [0,1]\to[0,1],g_1(x)=1-2|x-\frac{1}{2}|$. We generate “sawtooth" function $g_s: [0,1]\to[0,1]$ by recursively computing $g_s=g_1\circ g_{s-1}$. Then $g_s$ can be implemented by a 3D ReLU network $\mathcal{N}_{2,1,s}$.
\end{theorem}

\begin{proof}
    When $s=1$, $g_s$ is directly given by
\begin{displaymath}
    g_s(x)=2x-4\sigma\left(x-\frac{1}{2} \right),x\in[0,1].
\end{displaymath}
Now we use an induction step assuming that the affine combination of the first $k$ neurons, $k=1,\ldots,s-1$, and $y=x$ gives $g_{k}$. Then it is sufficient to prove that the output $g_{s}$ can be given by the affine combination of first $s$ neurons and $y=x$, since the latter can be given by two neurons, \textit{i.e.}, $x=\sigma(x)+\sigma(-x)$. Using $-g_{s}+\frac{1}{2}$ as the pre-activetion of the $s$-th neuron, we have $g_{s+1}=-4\sigma\left(-g_{s}+\frac{1}{2} \right)-\frac{1}{2}g_{s}+2$, which intra-link neurons in the same layer and then affinely combines $s$ neurons in the subsequent layer and $y=x$ by the induction hypothesis. 
\end{proof}

In addition to being a fundamental construction in approximation, the sawtooth function also helps to represent periodic functions. For example, for any $x\in [-1,1]$, we have $\cos{\left(\pi 2^s x\right)}=\cos{\left(\pi g_s(|x|)\right)}$ (see \cite{Dennis2021deep}). This shows height can also improve results in periodic and trigonometric approximation.



\textbf{Step 2. Approximation of product}. We show how to approximate $y=x^2$ with composition and combination of sawtooth functions, which is the key technique of the proof. Then, we leverage the formula $xy=\frac{1}{4}(x+y)^2-\frac{1}{4}(x-y)^2$ to construct the product. 

\begin{lemma}[Approximating $y=x^2$]\label{Squaring}
Let $f_s:[0,1]\to [0,1]$ be the piecewise linear interpolation of $y=x^2$ on $x=\frac{l}{2^s},l=0,1,\ldots,2^s$. Then for any $K,W\in\mathbb{N}^+$, $f_{s=KW}$ can be implemented by a 3D ReLU network $\mathcal{N}_{2W+2,K,W}$. 
    
\end{lemma}

\begin{proof}
    Note that for any $j\geq 2$, $f_{j-1}-f_{j}=\frac{g_j}{2^{2j}}$, where $g_j$ is the sawtooth function with $j$ pieces. Thus, we have
\begin{displaymath}
    f_{s}(x)=f_{1}(x)+\sum_{j=2}^{s}\left(f_{j}-f_{j-1} \right)=x-\sum_{j=1}^{s}\frac{g_j(x)}{2^{2j}}.
\end{displaymath}
As shown in Figure \ref{Figure_SquaringNet}, $f_{KW}$ can be implemented by a 3D network $\mathcal{N}_{2W+2,K,W}$. In each layer except the first, $W$ compositions of $g_W$ and an identity operation are performed, which has a width no more than $2W+2$ and a height no more than $W$. 

\end{proof}

\begin{figure}[htb!]
\center{\includegraphics[width=\linewidth] {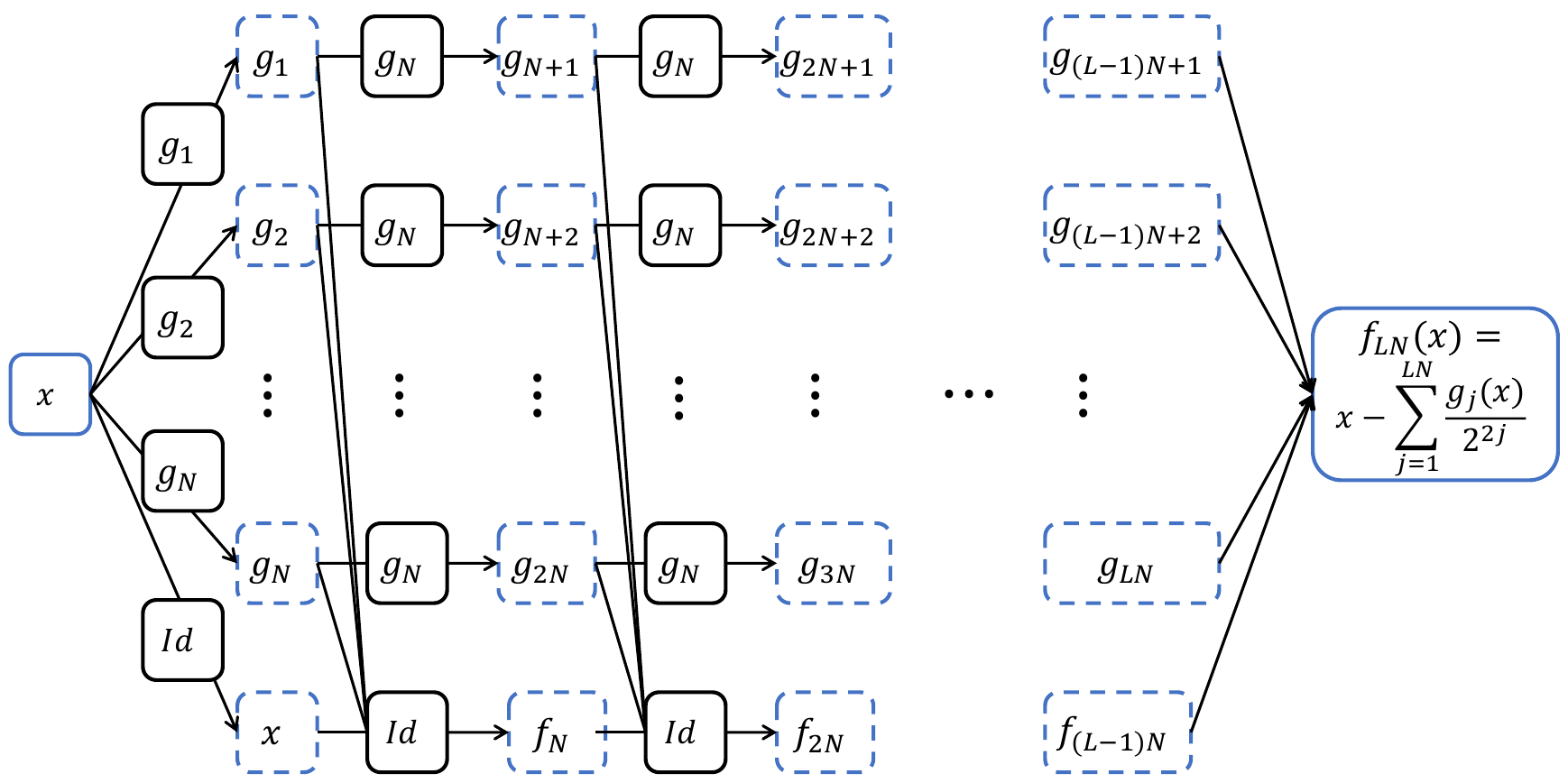}}
\caption{An illustration of a network $\mathcal{N}_{W+2,K,W}$ that implements the product function.}
\label{Figure_SquaringNet}
\end{figure}

\textbf{Remark 3.} In literature, \textit{e.g.}, \cite{lu2021deep,liang2016deep}, fully-connected networks of width $\mathcal{O}(W^2)$ and depth $\mathcal{O}(K)$ are constructed to reach approximation error of $\mathcal{O}(W^{-2K})$, \textit{i.e.}, with almost same number of neurons, we reach a much better approximation error of $\mathcal{O}(2^{-2WK})$ by adding height.

Now using the fact that $xy=\frac{(x+y)^2}{4}-\frac{(x-y)^2}{4}$, we easily give an approximation to the product function as follows.

\begin{theorem}[Approximation of the product $xy$] \label{Multiplication}
    For any $L,N\in\mathbb{N}^+$, there exist a function  $\widetilde{\times}(x,y):[0,1]^2\to[0,1]$ implemented by a 3D ReLU network $\mathcal{N}_{4(W+1),K,W}$ such that
\begin{displaymath}
    \left\|\widetilde{\times}(x,y)-xy \right\|_{L^{\infty}}\leq2^{-2KW-1}.
  \end{displaymath}
Besides, $\widetilde{\times}(x,y)=0$ if $x=0$ or $y=0$.
\end{theorem}

\begin{proof}
    Let $f_{KW}$ be a 3D network approximating the squaring function defined in Figure \ref{Squaring}. We set 
\begin{displaymath}
    \widetilde{\times}(x,y)=2f_{KW}\left( \frac{x+y}{2}\right)-2f_{KW}\left( \frac{x-y}{2}\right).
\end{displaymath}
Using the fact that for any $s\in\mathbb{N}^{+},x\in[0,1]$, $|f_s(x)-x^2|\leq 2^{-2(s+1)}$, we have

\begin{align}
    & ~~~~~~\nonumber |\widetilde{\times}(x,y)-xy| \\
    &=2 \left | f_{KW}\left( \frac{x+y}{2}\right)-f_{KW}\left( \frac{x-y}{2}\right) \right.  \\
    & ~~~-\left. \left( \frac{x+y}{2} \right)^2+\left( \frac{x-y}{2} \right)^2 \right |\\ \nonumber
    &\leq 2\left|f_{KW}\left( \frac{x+y}{2}\right)-\left( \frac{x+y}{2} \right)^2 \right| \\
    & ~~~+2\left|f_{KW}\left( \frac{x-y}{2}\right)-\left( \frac{x-y}{2} \right)^2 \right| \nonumber\\
    &\leq 2^{-2KW-1},\quad\text{for all } x,y\in [0,1]\nonumber.
\end{align}

It remains to verify $\widetilde{\times}(x,y)\in[0,1]$ for all $x,y\in[0,1]$. Observe that $\widetilde{\times}(x,y)$ is linear in $[0,1]\setminus \left( D_1\cup D_2 \cup D_3 \right)$ where 
\begin{displaymath}
    D_1=\left\{ \left( \frac{2j}{2^{KW}},y\right),j=0,1,\ldots,2^{KW-1},y\in[0,1] \right\},
\end{displaymath}

\begin{displaymath}
    D_2=\left\{\left( x, \frac{2k}{2^{KW}}\right),x\in[0,1],k=0,1,\ldots,2^{KW-1} \right\},
\end{displaymath}
and
\begin{displaymath}
    D_3=\left\{ ( x,y)\in[0,1]^2,x+y=\frac{2l}{2^{KW}},l=0,1,\ldots,2^{KW-1} \right\}.
\end{displaymath}
Besides, $\widetilde{\times}(x,y)=xy$ on $D_1\cap D_2 \cap D_3=\left\{\left( \frac{2j}{2^{KW}},\frac{2k}{2^{KW}}\right),j,k=0,1,\ldots,2^{KW-1}\right\}$. Therefore, $\widetilde{\times}(x,y)$ is the piece-wise linear interpolation of $xy$ on $D_1\cap D_2 \cap D_3$. Since the values on all the interpolation points are in $[0,1]$, their linear interpolation $\widetilde{\times}(\cdot,\cdot)$ is also in $[0,1]$.

Since each $f_{KW}$ can be implemented by a 3D ReLU network $\mathcal{N}_{2(W+1),K,W}$. Then the above product $\widetilde{\times}$ can be implemented by combining two such networks in parallel, which is of depth $K$, width $4(W+1)$, and height $W$.

\end{proof}

\textbf{Step 3. Approximation of Polynomials.} Instead of dealing with an arbitrary polynomial, here we take the product of multiple variables as a representative example. The derivation for an arbitrary polynomial can be easily generalized from the product of multiple variables. 


\begin{theorem}[Extension of product] \label{ExtendMultiplication}
For any $K,W,d\in\mathbb{N}^{+}$ with $K\geq 3,d\geq 2$, there exists a function $\widehat{\times}_{d}:[0,1]^d\to [0,1]$ implemented by a 3D ReLU network of depth $(d-1)K$ and width $4W+d+2$ such that 
	\begin{displaymath}
		\left\|\widehat{\times}_{k}(\boldsymbol{x})-x_1 x_2\cdots x_d \right\|_{L^{\infty}}\leq \frac{d-1}{2}\cdot2^{-2KW},\forall \boldsymbol{x}\in\mathbb{R}^d
	\end{displaymath}
 Besides, $\widehat{\times}(\boldsymbol{x})=0$ if one of $x_1,\ldots,x_d=0$.
\end{theorem}

\begin{proof}
	We recursively define $\widetilde{\times}_{1}(x_1)=x_1$ and 
	\begin{displaymath}
		\widetilde{\times}_{k}(x_{1},\ldots,x_k)=\widetilde{\times}\left(\widetilde{\times}_{k-1}(x_{1},\ldots,x_{k-1}),x_k\right),\;k\geq 2,
	\end{displaymath}
	where  $\widetilde{\times}:[0,1]^2\to [0,1]$ is the bivariate product function in \ref{Multiplication}, implemented by a ReLU FNN of width $4(W+1)$, depth $K$ and height $W$.
	Then following a similar procedure in the proof, we have

\begin{align}
    & ~~~~~~\nonumber \left\|\widetilde{\times}_{d}(\boldsymbol{x})-x_1\cdots x_d \right\|_{L^{\infty}} \\
    \nonumber
    &\leq \left\| \widetilde{\times}_d (x_1,\ldots,x_d)-\widetilde{\times}_{d-1} (x_1,\ldots,x_{d-1})x_d \right\|_{L^{\infty}} \\
    \nonumber
    & ~~~+\left\| \widetilde{\times}_{d-1} (x_1,\ldots,x_{d-1})x_d-x_1\cdots x_d \right\|_{L^{\infty}} \\ \nonumber
    &\leq 2^{-2KW-1}+\left\| \widetilde{\times}_{d-1} (x_1,\ldots,x_{d-1})-x_1\cdots x_{d-1} \right\|_{L^{\infty}}\nonumber\\
    \nonumber
    &\leq \cdots \leq \frac{d-1}{2}\cdot2^{-2KW}
\end{align}
 
	Then the 3D ReLU network implementing $\widetilde{\times}_d$ is presented in Figure \ref{Extend_prod}, which has depth $(d-1)K$, width $4(W+1)+d-2=4W+d+2$, and height $W$.
\end{proof}

\begin{figure}[htb!]
\center{\includegraphics[width=\linewidth] {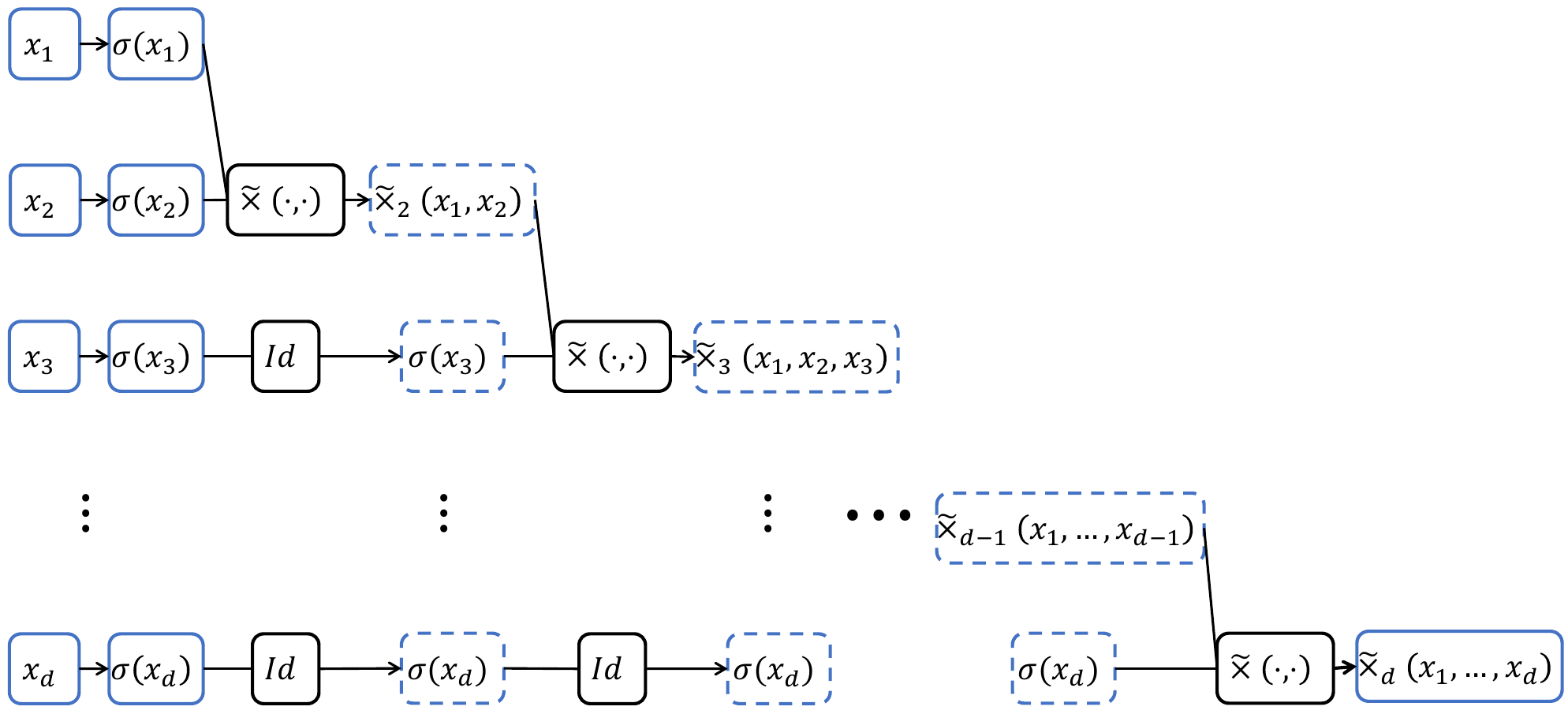}}
\caption{An illustration of a network $\mathcal{N}_{4W+d+2,K,W}$ that implements the product function.}
\label{Extend_prod}
\end{figure}




\textbf{Remark 4}. Our theory does not claim that height can replace depth and width. Instead, we advocate that intra-layer links are a powerful add-on to the network. The dimension of height naturally arises after introducing intra-layer links in a layer to create a hierarchy. In other words, the 2D-3D network transformation leads to efficacy in improving the network's approximation ability. More favorably, per our earlier analysis, the 2D-3D network transformation via inserting intra-layer links is not expensive in terms of both parametric and computational complexity.

\section{Experiments}

To evaluate if a 3D network can truly deliver good performance as theory suggests, we compare it with 2D networks such as fully-connected networks and ResNet on 5 synthetic datasets, 15 public tabular datasets, and 2 image benchmarks. 
Results show that on both regression and classification tasks, 3D networks can outperform the standard 2D network with fewer parameters, and adding height can also further enhance the performance of ResNet. At the same time, although 3D networks have more links to optimize, our experiments also demonstrate that 3D networks do not suffer the optimization issue, which aligns with what was observed in \cite{sadat2023connected}. Below are our experiments on image datasets.


Specifically, we first conduct regression experiments on 5 synthetic polynomials, 2 widely-used public datasets, and 3 real-world datasets. Results suggest that the proposed 3D network can boost the network's representation power. Second, we demonstrate the effectiveness and efficiency of the network on classification tasks using 8 tabular datasets, 2 fault diagnosis datasets, and 3 image benchmarks (CIFAR100, Tiny-ImageNet, and ImageNet). The experiments are implemented in Tensorflow using a CPU Intel i7-11800H processor at 2.3Hz and a GPU NVIDIA T600.

\subsection{Regression}

\subsubsection{Results on Synthetic Datasets}

We synthesize five elementary polynomials whose expressions are shown in Table~\ref{Table:Synthetic}. The normally distributed noise with 0 mean and 0.5 variance is added to the synthesized signals. A total of 1,000 points are sampled from $[-3,3]$ with an equal distance for training. The width of each layer is 8, and the depths of each network are respectively set to $[3,5,11,11,11]$. The epoch is 200, the batch size is 128, and ‘Adam' \cite{kingma2014adam} is the optimizer. The schedule for the learning rate is ‘ReduceLROnPlateau'. To test the generalization accuracy of the model, 1,000 points are sampled from $[-4,4]$, none of which appears in the training. Comparisons between fully-connected networks and the proposed 3D network are shown in Table~\ref{Table:MSE Value of Synthetic Experiments} and Figure~\ref{fig:polynomial function}.

\begin{table}[!t]
\centering\footnotesize
\caption{The expressions of synthetic polynomials.}
\label{Table:Synthetic}
\renewcommand{\arraystretch}{1.2}
\setlength{\tabcolsep}{5pt}
\begin{tabular}{c|l}
\hline
Synthetic functions & Expression  \\
\hline
$p_1(x)$ & $x^2+x$\\
$p_2(x)$ & $x^3+x^2+x$\\
$p_3(x)$ & $x^4+x^3+x^2+x$\\
$p_4(x)$ & $x^5+x^4+x^3+x^2+x$\\
$p_5(x)$ & $x^6+x^5+x^4+x^3+x^2+x$\\
\hline
\end{tabular}
\end{table}

\begin{table*}[!t]
\centering\footnotesize
\caption{MSE values of the proposed 3D network and fully-connected networks on synthetic experiments. \#PRM denotes the number of parameters.}
\label{Table:MSE Value of Synthetic Experiments}
\renewcommand{\arraystretch}{1.2}
\setlength{\tabcolsep}{5pt}
\begin{tabular}{c|c|c|c|c}
\hline
 Synthetic functions& Indicators & Three dimensional & Fully-connected ($s_1$) & Fully-connected ($s_2$)\\
\hline
\multirow{2}*{$p_1(x)$} & MSE & 0.6616 & 1.5755 & 1.0700 \\
\cline{2-5}
 & \#PRM & 99 & 97 & 169 \\
\hline
\multirow{2}*{$p_2(x)$} & MSE & 43.6248 & 46.9811 & 45.4330 \\
\cline{2-5}
& \#PRM & 245 & 241 & 313 \\
\hline
\multirow{2}*{$p_3(x)$} & MSE & 432.0960 & 491.0851 & 439.0270 \\
\cline{2-5}
& \#PRM & 683 & 673 & 745 \\
\hline
\multirow{2}*{$p_4(x)$} & MSE & 17,204.2516 & 23,397.8515 & 17,936.0898 \\
\cline{2-5}
& \#PRM & 829 & 817 & 889 \\
\hline
\multirow{2}*{$p_5(x)$} & MSE & 304,183.1563 & 334,655.9375 & 307,388.1875 \\
\cline{2-5}
& \#PRM & 829 & 817 & 889 \\
\hline
\end{tabular}
\vspace{-0.6cm}
\end{table*}

\begin{table}[ht!]
\centering\footnotesize
\caption{Statistics of 5 regression tabular datasets.}
\label{Table:Statistics of regression tabular datasets}
\renewcommand{\arraystretch}{1.2}
\setlength{\tabcolsep}{5pt}
\begin{tabular}{c|c|c|c}
\hline
Datasets & Instances  & Features & Features Type  \\
\hline
\textit{Boston housing} & 506    & 13  & discrete  \\
\hline
\textit{California housing} & 20640   & 8  & discrete  \\
\hline
\textit{Walmart}  & 97056   & 20 &  discrete \\
\hline
\textit{Energy consumption}  & 35024    &  6 &  continuous \\
\hline
\textit{Wind power}  & 49166   & 5  & continuous  \\
\hline
\end{tabular}
\vspace{-0.5cm}
\end{table}

In Table~\ref{Table:MSE Value of Synthetic Experiments}, we quantitatively compare the approximation errors between the proposed 3D network and fully-connected networks using the Mean Squared Error (MSE). We verify two kinds of fully-connected networks: one ($s_1$) has the same width and depth as proposed 3D networks; the other ($s_2$) is deeper. Thus, $s_1$ has comparable parameters as 3D networks and $s_2$ has more. We can draw two highlights from Table~\ref{Table:MSE Value of Synthetic Experiments}. First, when the parameters are comparable, 3D networks can lead to a huge improvement in MSE compared with fully-connected networks ($s_1$). For example, in approximating $p_1(x)$, the MSE decreases by 58$\%$. Second, when the fully-connected networks go deeper ($s_2$), 3D networks can still perform better.

Figure~\ref{fig:polynomial function} shows qualitatively the generalization between 3D and fully-connected ($s_1$) networks. We find that in $[-3,3]$ both 3D and fully-connected networks agree with the original functions well. However, in approximating peripheral parts of the function like $[-4,-3]$ and $[3,4]$, 3D networks outperform fully-connected ($s_1$) networks.




\begin{figure*}[!t]
\centering    
\subfloat[]{
\includegraphics[width=2.9cm,height=1.8cm]{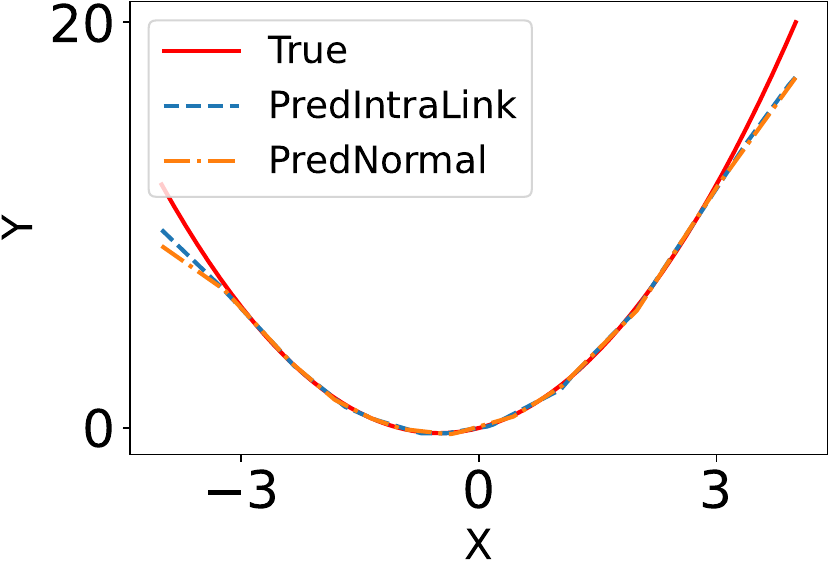}
\label{fig:/Mathmatic Regression x_hat_2} }~
\subfloat[]{
\includegraphics[width=2.9cm,height=1.8cm]{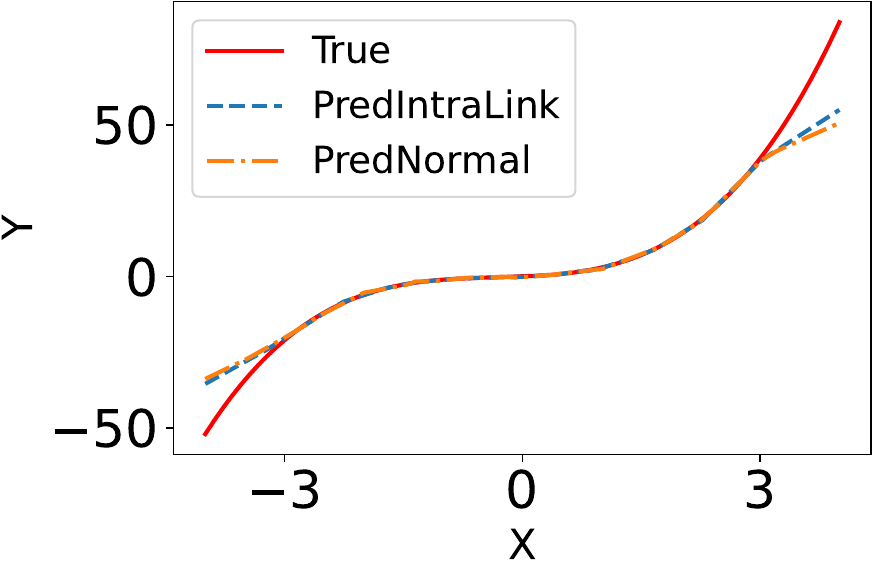}
\label{fig:/Mathmatic Regression x_hat_3} }~
\subfloat[]{
\includegraphics[width=2.9cm,height=1.8cm]{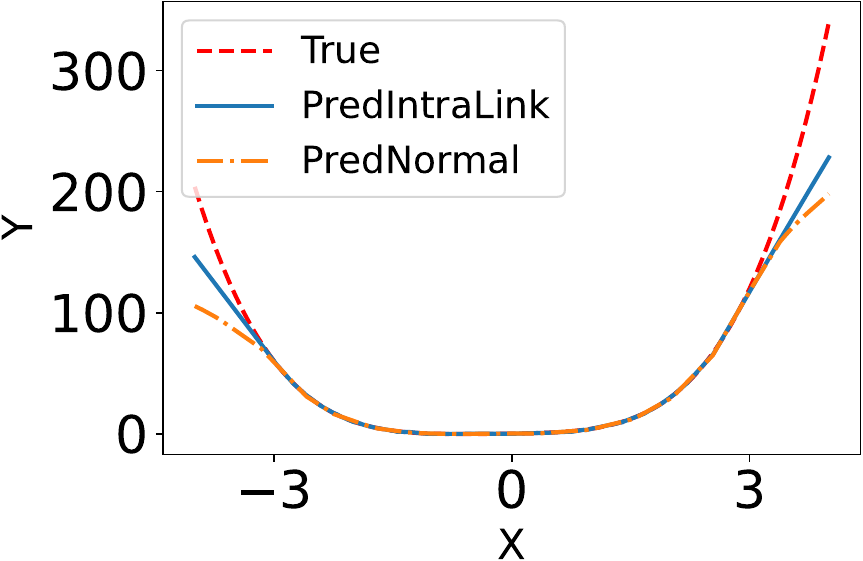}
\label{fig:/Mathmatic Regression x_hat_4} }~
\subfloat[]{
\includegraphics[width=2.9cm,height=1.8cm]{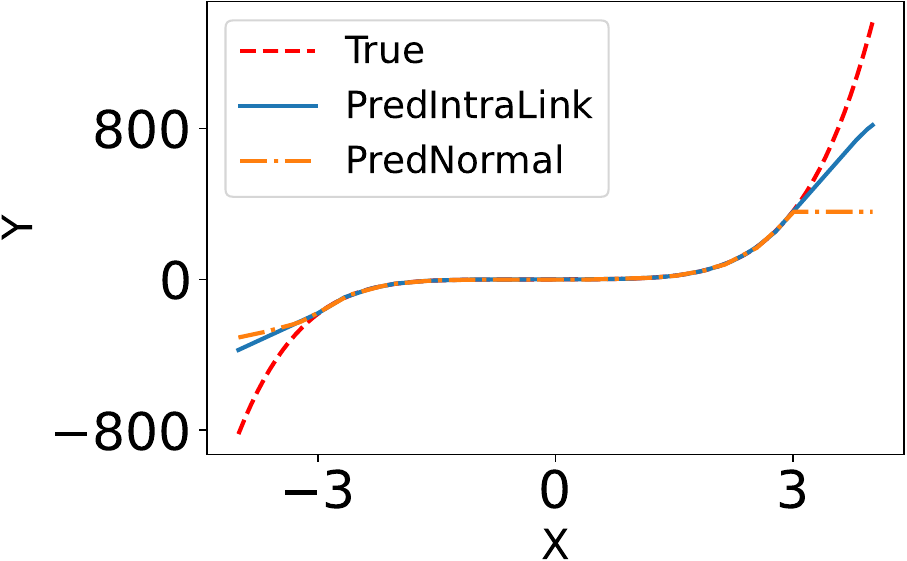}
\label{fig:/Mathmatic Regression x_hat_5} }~
\subfloat[]{
\includegraphics[width=2.9cm,height=1.8cm]{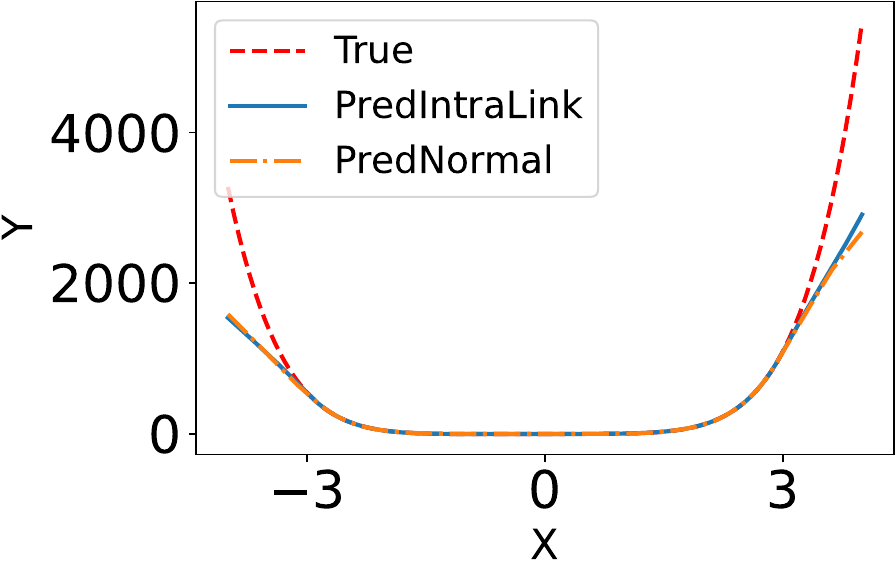}
\label{fig:/Mathmatic Regression x_hat_6} }~
\caption{Polynomial function. (a)$\sim$(e) are separately the polynomial functions from the second order to the sixth order.  }
\label{fig:polynomial function}
\vspace{-0.5cm}
\end{figure*}

\subsubsection{Results on Real-world Datasets}

\begin{figure*}[!t]
\centering    
\subfloat[]{
\includegraphics[width=2.9cm,height=1.8cm]{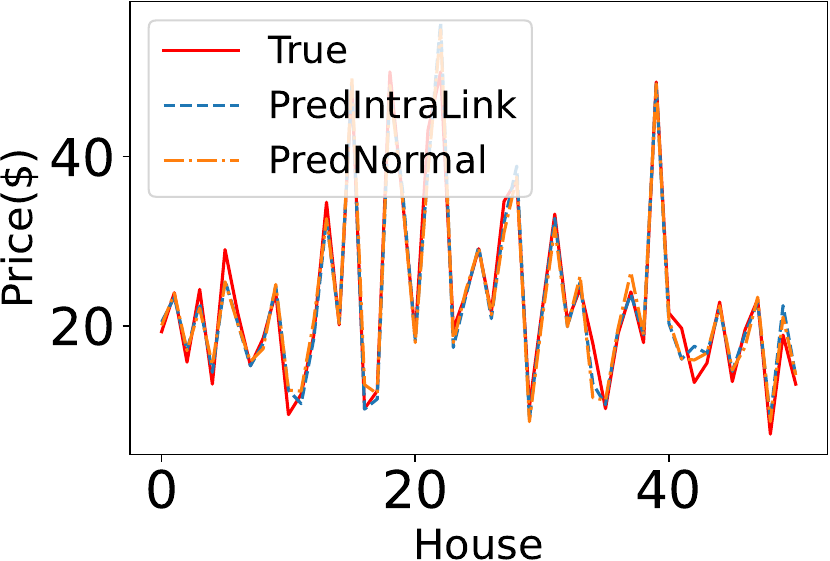}
\label{fig:/Boston house price2}}~
\subfloat[]{
\includegraphics[width=2.9cm,height=1.8cm]{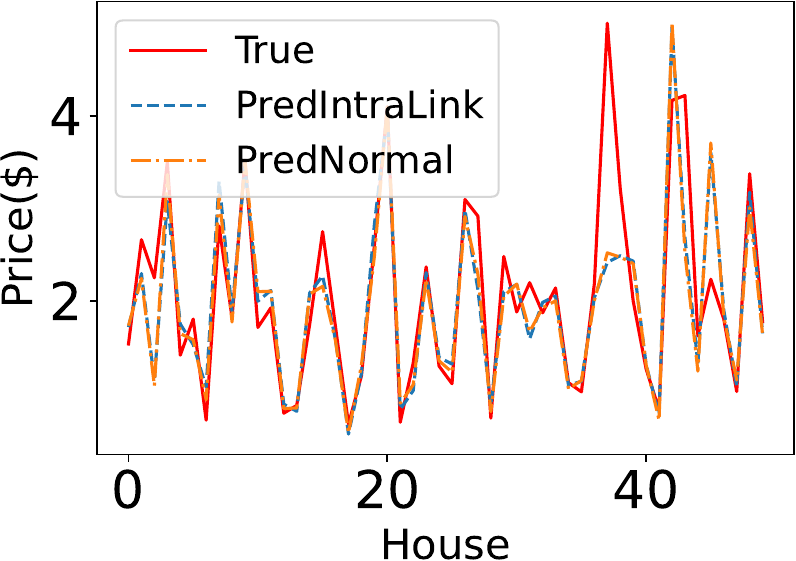}
\label{fig:/California house price3} } ~
\subfloat[]{
\includegraphics[width=2.9cm,height=1.8cm]{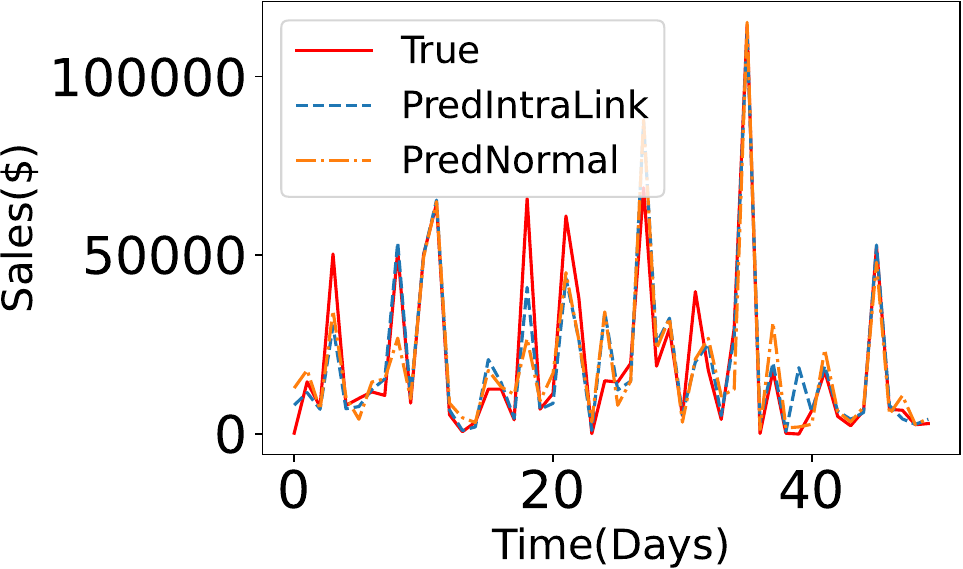}
\label{fig:/Walmart Regression} }~
\subfloat[]{
\includegraphics[width=2.9cm,height=1.8cm]{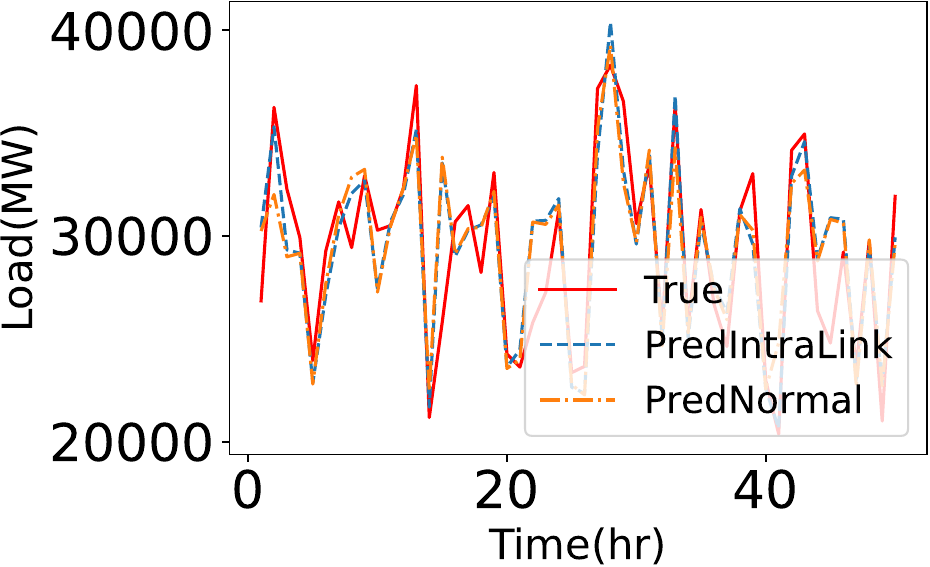}
\label{fig:/Load forecasting} }~
\subfloat[]{
\includegraphics[width=2.9cm,height=1.8cm]{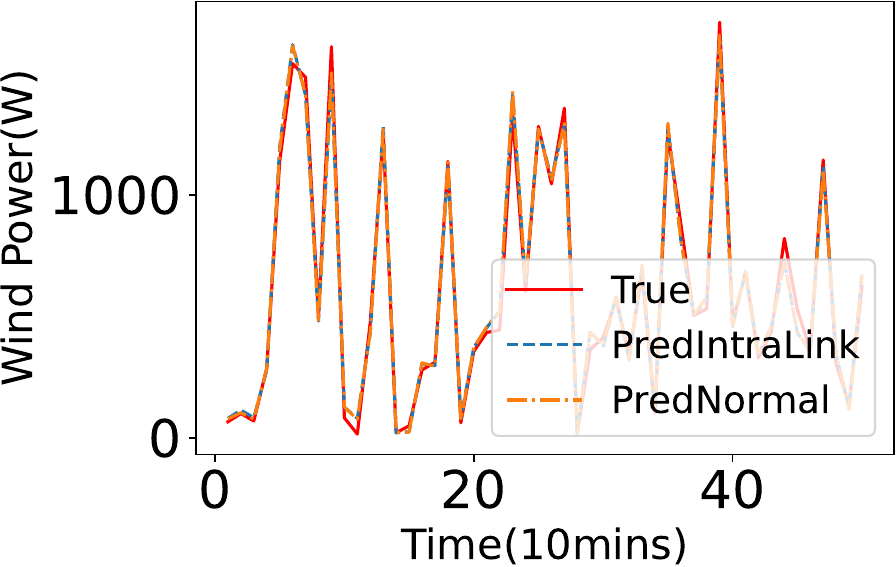}
\label{fig:/Wind forecasting} }~
\caption{Real-world regression experiments. (a) Boston house price; (b) California house price; (c) Walmart sales forecasting; (d) Load consumption forecasting; (e) Wind power forecasting. }
\label{fig:Real-world Regression Experiments}
\vspace{-0.5cm}
\end{figure*}

Furthermore, encouraged by positive results on synthetic experiments, we continue the regression experiments on 5 widely-used real-world datasets: \textit{Boston housing}\footnote{https://archive.ics.uci.edu/ml/machine-learning-databases/housing/housing.data}, \textit{California housing}\footnote{https://archive.ics.uci.edu/ml/machine-learning-databases/housing/housing.data}, \textit{Walmart}\footnote{https://www.kaggle.com/c/walmart-recruiting-store-sales-forecasting}, \textit{Energy consumption}\footnote{https://www.kaggle.com/datasets/robikscube/hourly-energy-consumption}, \textit{Wind power}\footnote{https://www.kaggle.com/datasets/theforcecoder/wind-power-forecasting}. The statistics of these datasets are summarized in Table~\ref{Table:Statistics of regression tabular datasets}. For small datasets, they are split into training and test sets with a ratio of 0.9:0.1. For the rest large datasets, we split them with a ratio of 0.8:0.1:0.1. Each layer has 256 neurons, and the depths of each network are respectively $[2,2,8,15,10]$. The epoch is 500, and the batch size is 128. Other hyper-parameters are the same as synthetic experiments. 

Consistent with synthetic experiments, Tabel~\ref{Table:MSE Value of Real-world regression experiments} summarizes the MSE and parameters of each network. $s_1$ has the same width and depth as the 3D networks, and $s_2$ has one more layer. We also find the 3D networks consistently outperform fully-connected networks ($s_1$ and $s_2$) on the 5 real-world datasets. When 3D and fully-connected networks have comparable parameters, 3D networks can lead by a large margin.

Figure~\ref{fig:Real-world Regression Experiments} shows visually the regression results between 3D and fully-connected networks on 5 datasets. By adding an intra-layer within each layer, the regression performance has significant improvement in real-world regression tasks especially at peaks. For example, the Walmart sales predicted by the 3D network at Day 10, and 19 are closer to the actual sales. For the load consumption prediction, the 3D network generates curves that align with the true load better, especially at 3hr and 42hr. 


In brief, in regression tasks, the 3D networks take the lead by a large margin. Such a superiority corroborates our theoretical analysis that adding intra-layer links can boost the network's representation power.

\begin{table}[!t]
\centering\footnotesize
\caption{MSE values of 3D and 2D networks on real-world datasets. \#PRM denotes the number of parameters.}
\label{Table:MSE Value of Real-world regression experiments}
\renewcommand{\arraystretch}{1.2}
\setlength{\tabcolsep}{5pt}
\begin{tabular}{c|c|c|c|c}
\hline
 Datasets & Indicators & 3D & 2D ($s_1$) & 2D ($s_2$)\\
\hline
\multirow{2}*{\textit{Boston house}} & MSE & 5.3113 & 10.5220 & 5.9844 \\
\cline{2-5}
 & \#PRM & 69,635 & 69,633 & 135,425 \\
\hline
\multirow{2}*{\textit{California house}} & MSE & 0.2518 & 0.2664 & 0.2631 \\
\cline{2-5}
& \#PRM &  69,635 & 69,633 & 135,425  \\
\hline
\multirow{2}*{\textit{Walmart sales}} & MSE & 88,556,472 & 111,667,784 & 110,398,056 \\
\cline{2-5}
& \#PRM & 466,185 & 466,177 & 531,969 \\
\hline
\multirow{2}*{\textit{load consumption}} & MSE & 7,922,336 & 8,191,562 & 8,191,562 \\
\cline{2-5}
& \#PRM & 923,152 & 923,137  & 988,929 \\
\hline
\multirow{2}*{\textit{Wind power}} & MSE & 3,812.82 & 3,991.2966 & 3,838.5940  \\
\cline{2-5}
& \#PRM & 593,931 & 593,921 & 659,713 \\
\hline
\end{tabular}
\end{table}

\subsection{Classification}

\subsubsection{Tabular Datasets}

We first investigate the effectiveness of 3D networks on classification tasks using tabular datasets. The tabular datasets contain \textit{Gaussian quantiles}\footnote{https://scikit-learn.org/stable/modules/generated/sklearn.datasets.make-gaussian-quantiles.html}, \textit{Breast cancer}\footnote{https://archive.ics.uci.edu/ml/datasets/Breast+cancer+Wisconsin+(Diagnostic)}, \textit{kddcup99}\footnote{http://kdd.ics.uci.edu/databases/kddcup99/kddcup99.html}, \textit{Wine}\footnote{https://scikit-learn.org/stable/modules/generated/sklearn.datasets.load wine.html}, \textit{Banknote authentication}\footnote{http://archive.ics.uci.edu/ml/datasets/banknote+authentication}, \textit{Heart Failure}\footnote{https://archive.ics.uci.edu/ml/datasets/heart+disease}, \textit{Ionosphere}\footnote{http://archive.ics.uci.edu/dataset/52/ionosphere}, \textit{Mobile Price}\footnote{https://www.kaggle.com/datasets/iabhishekofficial/mobile-price-classification}, \textit{CWRU}\footnote{https://engineering.case.edu/bearingdatacenter/apparatus-and-procedures}, \textit{Motor fault}\footnote{https://gitlab.com/power-systems-technion/motor-faults}. They are publicly available from the Python scikit-learn package, UCI machine learning repository, Kaggle, and so on. The \textit{Concentric circles} and \textit{Gaussian quantiles} are two synthetic datasets, and the rest are all real-world datasets including medical dataset, network intrusion detection dataset (\textit{Kddcup99}), climate dataset (\textit{Ionosphere}), mobile price dataset and fault diagnosis dataset \textit{etc.}. All the statistics of tabular datasets are summarized in Table~\ref{Table:Statistics of tabular datasets}.



\begin{table}[!t]
\centering\footnotesize
\caption{Statistics of 10 classification tabular datasets.}
\label{Table:Statistics of tabular datasets}
\renewcommand{\arraystretch}{1.2}
\setlength{\tabcolsep}{5pt}
\begin{tabular}{c|c|c|c|c}
\hline
Datasets & Instances & Classes & Features & Features Type  \\
\hline
\textit{Gaussian quantiles}  & 10000  & 2  & 2  & discrete  \\
\hline
\textit{Breast cancer}  &  569 & 2  &  30 & discrete  \\
\hline
\textit{kddcup99}  &  494021 &  23 & 41  & discrete  \\
\hline
\textit{Concentric circles}  &  16000 &  4  & 2  &  discrete \\
\hline
\textit{Banknote authentication}  & 1347  & 2  & 4  & discrete  \\
\hline
\textit{Heart Failure}  & 299  &  2 & 12  & discrete  \\
\hline
\textit{Ionosphere}  & 350  & 2  & 34  & discrete  \\
\hline
\textit{Mobile Price}  &  2000 & 4  &  20 & discrete  \\
\hline
\textit{CWRU}  &  2400 &  10 & 1200  & continuous  \\
\hline
\textit{Motor fault}  & 3392  & 6  & 120  & continuous  \\
\hline
\end{tabular}
\end{table}

Similar to regression experiments, Table~\ref{Table:Test accuracy of all datasets} indicates as well that 3D networks have stronger expressivity than fully-connected networks. When evaluated on simple datasets such as \textit{concentric circles}, \textit{Gaussian quantiles}, and \textit{Kddcup99}, the improvement is moderate. This is because both 3D and fully-connected networks can extract features for accurate classification. In contrast, the 3D networks perform much better on complex datasets. For example, for the fault diagnosis in \textit{CWRU}, the gains are respectively 2.04$\%$ and 1.2$\%$. These improvements in test accuracy substantiate the effectiveness of 3D networks.


\begin{table*}[!t]
\centering\footnotesize
\caption{Test accuracy of 3D and fully-connected networks on tabular classification datasets. \#PRM denotes the number of parameters.}
\label{Table:Test accuracy of all datasets}
\renewcommand{\arraystretch}{1.2}
\setlength{\tabcolsep}{5pt}
\begin{tabular}{c|c|c|c|c}
\hline
 Datasets & Indicators & Intra-layer & Fully-connected ($s_1$) & Fully-connected ($s_2$)\\
\hline
\multirow{2}*{\textit{Concentric circles}} & ACC & 99.93$\%$ & 99.75$\%$ & 99.43$\%$ \\
\cline{2-5}
 & \#PRM & 61 & 60  &  132 \\
\hline
\multirow{2}*{\textit{Gaussian quantiles}} & ACC & 97.2$\%$ & 96.6$\%$ & 96.1$\%$ \\
\cline{2-5}
& \#PRM & 554  & 546 &  618 \\
\hline
\multirow{2}*{\textit{Breast cancer}} & ACC & 98.246$\%$ & 96.49$\%$ & 96.86$\%$ \\
\cline{2-5}
& \#PRM & 2,380,812 & 2,380,802 & 2,643,458 \\
\hline
\multirow{2}*{\textit{Kddcup99}} & ACC & 99.95$\%$ & 99.94$\%$ & 99.945$\%$ \\
\cline{2-5}
& \#PRM & 295,961 & 295,959 & 558,615 \\
\hline
\multirow{2}*{\textit{Banknote authentication}} & ACC & 100$\%$ & 98.52$\%$ & 99.26$\%$  \\
\cline{2-5}
& \#PRM & 3,587 & 3,586 & 266,242 \\
\hline
\multirow{2}*{\textit{Heart Failure}} & ACC & 83.33$\%$ & 80.00$\%$  & 76.67$\%$ \\
\cline{2-5}
 & \#PRM & 795,654 & 795,650 & 1,058,306 \\
\hline
\multirow{2}*{\textit{Ionosphere}} & ACC & 100$\%$ & 97.14$\%$ & 94.28$\%$ \\
\cline{2-5}
& \#PRM & 806,918  & 806,914 & 1,069,570  \\
\hline
\multirow{2}*{\textit{Mobile Price}} & ACC & 92.5$\%$ & 91.5$\%$ & 92$\%$ \\
\cline{2-5}
& \#PRM &  806,918  & 806,914 & 1,069,570   \\
\hline
\multirow{2}*{\textit{CWRU}} & ACC & 89.12$\%$ & 87.08$\%$ & 87.92$\%$ \\
\cline{2-5}
& \#PRM & 2,105,873 & 2,105,866 & 2,368,522 \\
\hline
\multirow{2}*{\textit{Motor fault}} & ACC & 98.53$\%$ & 97.35$\%$ & 97.64$\%$  \\
\cline{2-5}
& \#PRM & 2,166,287 & 2,166,278 & 2,428,934 \\
\hline
\end{tabular}
\end{table*}

\begin{table}[!t]
\centering\footnotesize
\caption{Test accuracy of 3D and 2D networks on two image datasets. \#PRM denotes the number of parameters.}
\label{Table:Test accuracy of Images}
\renewcommand{\arraystretch}{1.2}
\setlength{\tabcolsep}{5pt}
\begin{tabular}{c|c|c|c}
\hline
 Datasets & Indicators & Intra-layer & ResNet18  \\
\hline
\multirow{3}*{CIFAR100} & ACC &  75.34$\%$ & 75.61$\%$ \\
\cline{2-4}
 & \#PRM & 6.55M  & 11.2M   \\
 \cline{2-4}
 & FLOPS & 510.57M & 611.12M \\
 \hline
\multirow{2}*{Tiny-ImageNet} & ACC & 46.60$\%$ & 42.51$\%$ \\
\cline{2-4}
 & \#PRM & 5.07M  & 11.2M   \\ 
 \cline{2-4}
 & FLOPS & 656.18M & 756.84M \\
\hline
\end{tabular}
\end{table}

\begin{figure}[h]
    \centering
    \includegraphics[width=0.6\linewidth]{ 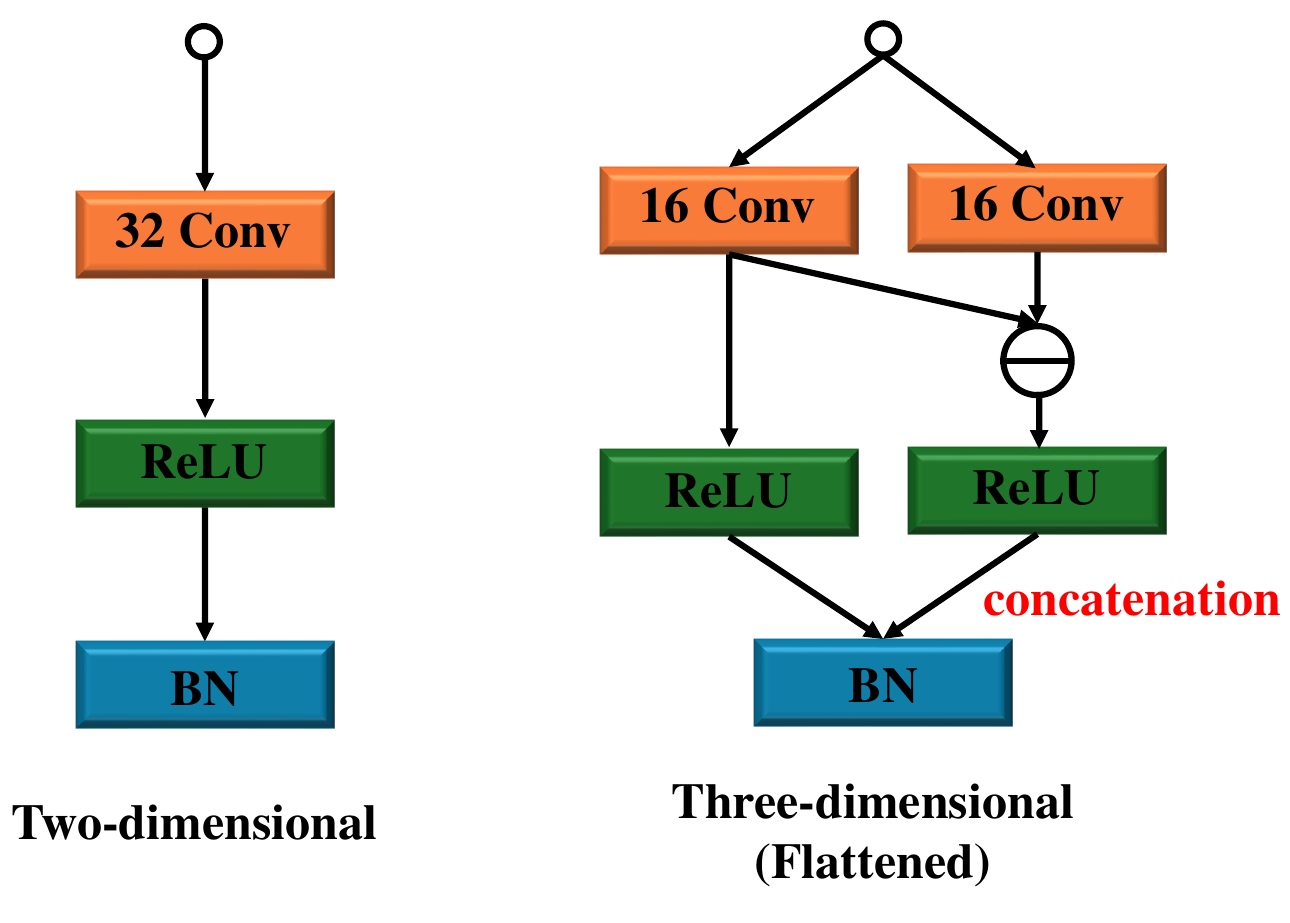}
    \caption{A visualization of residual blocks used in 2D and 3D networks.}
    \label{fig:residual_block}
    \vspace{-0.3cm}
\end{figure}

\subsubsection{Image Datasets} We conduct three image classification experiments utilizing CIFAR100~\cite{krizhevsky2009learning}, Tiny-ImageNet~\cite{le2015tiny}, and ImageNet~\cite{deng2009imagenet}. In each layer, channels are equally divided into two parts, and each part is used to capture the image features. The outputs of one part are added to the other in the form of shortcuts by trainable parameters and then concatenated (height=2). We visualize residual blocks used in two- and 3D networks in Figure \ref{fig:residual_block}. Mathematically, suppose that the input of a height layer is $[a,b]$, while the output is $[\mathrm{ReLU}(a), \mathrm{ReLU}(\mathrm{ReLU}(a)+b)]$. The 3D network contains 1 convolutional layer, several height layers, and 1 fully-connected layer with ‘softmax’. We use the same hyperparameters as ResNet18. 

Due to the limitation of memory, we made some simplifications to the experiments. For CIFAR100 classification, we augment the dataset by rotating, shifting, shearing, and horizontally flipping the original images. However, for TinyImageNet classification we ignored the data augment to save memories. All models are trained with a batch size of 128 using the ‘SGD’ optimizer with an initialized learning rate of 0.1 and moment 0.9. 

Taking ResNet18 as a benchmark, Table~\ref{Table:Test accuracy of Images} shows that when we use fewer convolutional kernels in a 3D network, it has similar performance on CIFAR100 but significantly better performance on Tiny-ImageNet compared to ResNet18, which verifies that adding height can enhance the network’s representation power. In ImageNet, turning ResNet into a 3D network is also an effective strategy. As Table \label{ImageNetResults_small} shows, the 3D network with 7M parameters has the comparable performance with the DY-ResNet-10 that has 18.6 parameters. 

\begin{table}[htb]
 \centering

\caption{The top-1 error (\%) comparisons on ImageNet validation set.}
 \begin{tabular}{ p{3.5cm}|p{1cm}|p{1.5cm}  }
 \hline
 Network  & Params & Error ($\%$) \\
  \hline
  DY-ResNet-10 (2019) \cite{chen2020dynamic}  & 18.6M & 32.3\\
  \hline
  ResNet-18 (Intra-linked) &  7.0M & 33.3\\
  \hline
   \end{tabular}
 \label{ImageNetResults_small}
 \vspace{-2mm}
\end{table}   




\section{Conclusion}

In this draft, we propose a 3D network by interconnecting neurons of a layer in a wide and deep network. Furthermore, via bound estimation, dedicated construction, and approximation error analysis, we have shown that a 3D network is much more expressive than a fully-connected one, given the same number of neurons. Then, we have shown that 3D networks can deliver superior performance via systematic experiments. Future endeavors can be using 3D networks to solve more real-world problems and find their killer applications.



\bibliographystyle{ieeetr}
\bibliography{reference}

\begin{thebibliography}{10}

\bibitem{lecun2015deep}
Y.~LeCun, Y.~Bengio, and G.~Hinton, ``Deep learning,'' {\em Nature}, vol.~521, no.~7553, pp.~436--444, 2015.

\bibitem{jordan1997serial}
M.~I. Jordan, ``Serial order: A parallel distributed processing approach,'' in {\em Advances in psychology}, vol.~121, pp.~471--495, Elsevier, 1997.

\bibitem{hochreiter1997long}
S.~Hochreiter, ``Long short-term memory,'' {\em Neural Computation MIT-Press}, 1997.

\bibitem{cho2020learning}
K.~Cho, B.~Van~Merrienboer, C.~Gulcehre, D.~Bahdanau, F.~Bougares, H.~Schwenk, and Y.~Bengio, ``Learning phrase representations using rnn encoder-decoder for statistical machine translation. arxiv 2014,'' {\em arXiv preprint arXiv:1406.1078}, 2020.

\bibitem{zhang2022theoretically}
S.-Q. Zhang and Z.-H. Zhou, ``Theoretically provable spiking neural networks,'' {\em Advances in Neural Information Processing Systems}, vol.~35, pp.~19345--19356, 2022.

\bibitem{sadat2023connected}
R.~Sadat~Shahir, Z.~Humayun, M.~Akter~Tamim, S.~Saha, and M.~G. Rabiul~Alam, ``Connected hidden neurons (chnnet): An artificial neural network for rapid convergence,'' {\em arXiv e-prints}, pp.~arXiv--2305, 2023.

\bibitem{zhang2022neural}
S.~Zhang, Z.~Shen, and H.~Yang, ``Neural network architecture beyond width and depth,'' {\em Advances in Neural Information Processing Systems}, vol.~35, pp.~5669--5681, 2022.

\bibitem{shen2019deep}
Z.~Shen, H.~Yang, and S.~Zhang, ``Deep network approximation characterized by number of neurons,'' {\em arXiv preprint arXiv:1906.05497}, 2019.

\bibitem{yarotsky2017error}
D.~Yarotsky, ``Error bounds for approximations with deep relu networks,'' {\em Neural Networks}, vol.~94, pp.~103--114, 2017.

\bibitem{hornik1989multilayer}
K.~Hornik, M.~Stinchcombe, H.~White, {\em et~al.}, ``Multilayer feedforward networks are universal approximators.,'' {\em Neural networks}, vol.~2, no.~5, pp.~359--366, 1989.

\bibitem{fan2020quasi}
F.-L. Fan, R.~Lai, and G.~Wang, ``Quasi-equivalence of width and depth of neural networks,'' {\em arXiv preprint arXiv:2002.02515}, 2020.

\bibitem{lu2017expressive}
Z.~Lu, H.~Pu, F.~Wang, Z.~Hu, and L.~Wang, ``The expressive power of neural networks: a view from the width,'' in {\em Proceedings of the 31st International Conference on Neural Information Processing Systems}, pp.~6232--6240, 2017.

\bibitem{park2020minimum}
S.~Park, C.~Yun, J.~Lee, and J.~Shin, ``Minimum width for universal approximation,'' in {\em International Conference on Learning Representations}, 2020.

\bibitem{levine2020limits}
Y.~Levine, N.~Wies, O.~Sharir, H.~Bata, and A.~Shashua, ``Limits to depth efficiencies of self-attention,'' 2020.

\bibitem{shen2020deep}
Z.~Shen, ``Deep network approximation characterized by number of neurons,'' {\em Communications in Computational Physics}, vol.~28, no.~5, pp.~1768--1811, 2020.

\bibitem{yarotsky2018optimal}
D.~Yarotsky, ``Optimal approximation of continuous functions by very deep relu networks,'' in {\em Conference on learning theory}, pp.~639--649, PMLR, 2018.

\bibitem{lu2021deep}
J.~Lu, Z.~Shen, H.~Yang, and S.~Zhang, ``Deep network approximation for smooth functions,'' {\em SIAM Journal on Mathematical Analysis}, vol.~53, no.~5, pp.~5465--5506, 2021.

\bibitem{yarotsky2020phase}
D.~Yarotsky and A.~Zhevnerchuk, ``The phase diagram of approximation rates for deep neural networks,'' {\em Advances in neural information processing systems}, vol.~33, pp.~13005--13015, 2020.

\bibitem{montanelli2021deep}
H.~Montanelli, H.~Yang, and Q.~Du, ``Deep relu networks overcome the curse of dimensionality for generalized bandlimited functions,'' {\em Journal of Computational Mathematics}, vol.~39, no.~6, 2021.

\bibitem{shen2021deep}
Z.~Shen, H.~Yang, and S.~Zhang, ``Deep network with approximation error being reciprocal of width to power of square root of depth,'' {\em Neural Computation}, vol.~33, no.~4, pp.~1005--1036, 2021.

\bibitem{zhang2022deep}
S.~Zhang, Z.~Shen, and H.~Yang, ``Deep network approximation: Achieving arbitrary accuracy with fixed number of neurons,'' {\em Journal of Machine Learning Research}, vol.~23, no.~276, pp.~1--60, 2022.

\bibitem{shen2021neural}
Z.~Shen, H.~Yang, and S.~Zhang, ``Neural network approximation: Three hidden layers are enough,'' {\em Neural Networks}, vol.~141, pp.~160--173, 2021.

\bibitem{yarotsky2021elementary}
D.~Yarotsky, ``Elementary superexpressive activations,'' in {\em International Conference on Machine Learning}, pp.~11932--11940, PMLR, 2021.

\bibitem{jayakumar2019multiplicative}
S.~M. Jayakumar, W.~M. Czarnecki, J.~Menick, J.~Schwarz, J.~Rae, S.~Osindero, Y.~W. Teh, T.~Harley, and R.~Pascanu, ``Multiplicative interactions and where to find them,'' in {\em International Conference on Learning Representations}, 2019.

\bibitem{fan2020universal}
F.~Fan, J.~Xiong, and G.~Wang, ``Universal approximation with quadratic deep networks,'' {\em Neural Networks}, vol.~124, pp.~383--392, 2020.

\bibitem{fan2023expressivity}
F.-L. Fan, M.~Li, F.~Wang, R.~Lai, and G.~Wang, ``On expressivity and trainability of quadratic networks,'' {\em IEEE Transactions on Neural Networks and Learning Systems}, 2023.

\bibitem{poggio2020theoretical}
T.~Poggio, A.~Banburski, and Q.~Liao, ``Theoretical issues in deep networks,'' {\em Proceedings of the National Academy of Sciences}, vol.~117, no.~48, pp.~30039--30045, 2020.

\bibitem{safran2019depth}
I.~Safran, R.~Eldan, and O.~Shamir, ``Depth separations in neural networks: what is actually being separated?,'' in {\em Conference on Learning Theory}, pp.~2664--2666, PMLR, 2019.

\bibitem{vardi2020neural}
G.~Vardi and O.~Shamir, ``Neural networks with small weights and depth-separation barriers,'' {\em Advances in Neural Information Processing Systems}, vol.~33, pp.~19433--19442, 2020.

\bibitem{guhring2020expressivity}
I.~G{\"u}hring, M.~Raslan, and G.~Kutyniok, ``Expressivity of deep neural networks,'' {\em arXiv preprint arXiv:2007.04759}, 2020.

\bibitem{vardi2021size}
G.~Vardi, D.~Reichman, T.~Pitassi, and O.~Shamir, ``Size and depth separation in approximating benign functions with neural networks,'' in {\em Conference on Learning Theory}, pp.~4195--4223, PMLR, 2021.

\bibitem{safran2022optimization}
I.~Safran and J.~Lee, ``Optimization-based separations for neural networks,'' in {\em Conference on Learning Theory}, pp.~3--64, PMLR, 2022.

\bibitem{venturi2022depth}
L.~Venturi, S.~Jelassi, T.~Ozuch, and J.~Bruna, ``Depth separation beyond radial functions,'' {\em The Journal of Machine Learning Research}, vol.~23, no.~1, pp.~5309--5364, 2022.

\bibitem{vardi2022width}
G.~Vardi, G.~Yehudai, and O.~Shamir, ``Width is less important than depth in relu neural networks,'' in {\em Conference on Learning Theory}, pp.~1249--1281, PMLR, 2022.

\bibitem{pascanu2013number}
R.~Pascanu, G.~Montufar, and Y.~Bengio, ``On the number of response regions of deep feed forward networks with piece-wise linear activations,'' {\em arXiv preprint arXiv:1312.6098}, 2013.

\bibitem{montufar2014number}
G.~F. Montufar, R.~Pascanu, K.~Cho, and Y.~Bengio, ``On the number of linear regions of deep neural networks,'' in {\em Advances in Neural Information Processing Systems}, pp.~2924--2932, 2014.

\bibitem{telgarsky2015representation}
M.~Telgarsky, ``Representation benefits of deep feedforward networks,'' {\em arXiv preprint arXiv:1509.08101}, 2015.

\bibitem{montufar2017notes}
G.~Mont{\'u}far, ``Notes on the number of linear regions of deep neural networks,'' {\em Sampling Theory Appl., Tallinn, Estonia, Tech. Rep}, 2017.

\bibitem{serra2018bounding}
T.~Serra, C.~Tjandraatmadja, and S.~Ramalingam, ``Bounding and counting linear regions of deep neural networks,'' in {\em International Conference on Machine Learning}, pp.~4558--4566, PMLR, 2018.

\bibitem{hu2018nearly}
Q.~Hu and H.~Zhang, ``Nearly-tight bounds on linear regions of piecewise linear neural networks,'' {\em arXiv preprint arXiv:1810.13192}, 2018.

\bibitem{xiong2020number}
H.~Xiong, L.~Huang, M.~Yu, L.~Liu, F.~Zhu, and L.~Shao, ``On the number of linear regions of convolutional neural networks,'' in {\em International Conference on Machine Learning}, pp.~10514--10523, PMLR, 2020.

\bibitem{bianchini2014complexity}
M.~Bianchini and F.~Scarselli, ``On the complexity of neural network classifiers: A comparison between shallow and deep architectures,'' {\em IEEE Transactions on Neural Networks and Learning Systems}, vol.~25, no.~8, pp.~1553--1565, 2014.

\bibitem{raghu2017expressive}
M.~Raghu, B.~Poole, J.~Kleinberg, S.~Ganguli, and J.~Sohl-Dickstein, ``On the expressive power of deep neural networks,'' in {\em International Conference on Machine Learning}, pp.~2847--2854, PMLR, 2017.

\bibitem{sanford2022expressivity}
C.~H. Sanford and V.~Chatziafratis, ``Expressivity of neural networks via chaotic itineraries beyond sharkovsky’s theorem,'' in {\em International Conference on Artificial Intelligence and Statistics}, pp.~9505--9549, PMLR, 2022.

\bibitem{joshi2023expressive}
C.~K. Joshi, C.~Bodnar, S.~V. Mathis, T.~Cohen, and P.~Li{\`o}, ``On the expressive power of geometric graph neural networks,'' {\em arXiv preprint arXiv:2301.09308}, 2023.

\bibitem{hanin2019deep}
B.~Hanin and D.~Rolnick, ``Deep relu networks have surprisingly few activation patterns,'' in {\em Advances in Neural Information Processing Systems}, pp.~359--368, 2019.

\bibitem{zaslavsky1997facing}
T.~Zaslavsky, ``Facing up to arrangements: face-count formulas for partitions of space by hyperplanes,'' {\em Memoirs of American Mathematical Society}, vol.~154, pp.~1--95, 1997.

\bibitem{malach2019deeper}
E.~Malach and S.~Shalev-Shwartz, ``Is deeper better only when shallow is good?,'' {\em Advances in Neural Information Processing Systems}, vol.~32, 2019.

\bibitem{poole2016exponential}
B.~Poole, S.~Lahiri, M.~Raghu, J.~Sohl-Dickstein, and S.~Ganguli, ``Exponential expressivity in deep neural networks through transient chaos,'' {\em Advances in Neural Information Processing Systems}, vol.~29, 2016.

\bibitem{bu2020depth}
K.~Bu, Y.~Zhang, and Q.~Luo, ``Depth-width trade-offs for neural networks via topological entropy,'' {\em arXiv preprint arXiv:2010.07587}, 2020.

\bibitem{eldan2016power}
R.~Eldan and O.~Shamir, ``The power of depth for feedforward neural networks,'' in {\em Conference on Learning Theory}, pp.~907--940, PMLR, 2016.

\bibitem{safran2017depth}
I.~Safran and O.~Shamir, ``Depth-width tradeoffs in approximating natural functions with neural networks,'' in {\em International Conference on Machine Learning}, pp.~2979--2987, PMLR, 2017.

\bibitem{venturi2021depth}
L.~Venturi, S.~Jelassi, T.~Ozuch, and J.~Bruna, ``Depth separation beyond radial functions,'' {\em Journal of Machine Learning Research}, 2021.

\bibitem{telgarsky2016benefits}
M.~Telgarsky, ``Benefits of depth in neural networks,'' in {\em Conference on Learning Theory}, pp.~1517--1539, PMLR, 2016.

\bibitem{arora2016understanding}
R.~Arora, A.~Basu, P.~Mianjy, and A.~Mukherjee, ``Understanding deep neural networks with rectified linear units,'' {\em arXiv preprint arXiv:1611.01491}, 2016.

\bibitem{daniely2017depth}
A.~Daniely, ``Depth separation for neural networks,'' in {\em Conference on Learning Theory}, pp.~690--696, PMLR, 2017.

\bibitem{rolnick2017power}
D.~Rolnick and M.~Tegmark, ``The power of deeper networks for expressing natural functions,'' {\em arXiv preprint arXiv:1705.05502}, 2017.

\bibitem{poggio2017and}
T.~Poggio, H.~Mhaskar, L.~Rosasco, B.~Miranda, and Q.~Liao, ``Why and when can deep-but not shallow-networks avoid the curse of dimensionality: a review,'' {\em International Journal of Automation and Computing}, vol.~14, no.~5, pp.~503--519, 2017.

\bibitem{jalali2019efficient}
S.~Jalali, C.~Nuzman, and I.~Saniee, ``Efficient deep learning of gmms,'' {\em arXiv preprint arXiv:1902.05707}, 2019.

\bibitem{anderson2023connections}
K.~Anderson, P.~Gr{\"u}ning, and E.~Barth, ``Connections between pairs of filters improve the accuracy of convolutional neural networks,'' in {\em 2023 International Joint Conference on Neural Networks (IJCNN)}, pp.~1--7, IEEE, 2023.

\bibitem{shahir2023connected}
R.~S. Shahir, Z.~Humayun, M.~A. Tamim, S.~Saha, and M.~G.~R. Alam, ``Connected hidden neurons (chnnet): An artificial neural network for rapid convergence,'' {\em arXiv preprint arXiv:2305.10468}, 2023.

\bibitem{zhang2024intrinsic}
S.-Q. Zhang, J.-Y. Chen, J.-H. Wu, G.~Zhang, H.~Xiong, B.~Gu, and Z.-H. Zhou, ``On the intrinsic structures of spiking neural networks,'' {\em Journal of Machine Learning Research}, vol.~25, no.~194, pp.~1--74, 2024.

\bibitem{liang2016deep}
S.~Liang and R.~Srikant, ``Why deep neural networks for function approximation?,'' {\em arXiv preprint arXiv:1610.04161}, 2016.

\bibitem{Dennis2021deep}
D.~Elbrächter, D.~Perekrestenko, P.~Grohs, and H.~Bölcskei, ``Deep neural network approximation theory,'' {\em IEEE Transactions on Information Theory}, vol.~67, no.~5, pp.~2581--2623, 2021.

\bibitem{kingma2014adam}
D.~P. Kingma and J.~Ba, ``Adam: A method for stochastic optimization,'' {\em arXiv preprint arXiv:1412.6980}, 2014.

\bibitem{krizhevsky2009learning}
A.~Krizhevsky, G.~Hinton, {\em et~al.}, ``Learning multiple layers of features from tiny images,'' 2009.

\bibitem{le2015tiny}
Y.~Le and X.~Yang, ``Tiny imagenet visual recognition challenge,'' {\em CS 231N}, vol.~7, no.~7, p.~3, 2015.

\bibitem{deng2009imagenet}
J.~Deng, W.~Dong, R.~Socher, L.-J. Li, K.~Li, and L.~Fei-Fei, ``Imagenet: A large-scale hierarchical image database,'' in {\em 2009 IEEE conference on computer vision and pattern recognition}, pp.~248--255, Ieee, 2009.

\bibitem{chen2020dynamic}
Y.~Chen, X.~Dai, M.~Liu, D.~Chen, L.~Yuan, and Z.~Liu, ``Dynamic convolution: Attention over convolution kernels,'' in {\em Proceedings of the IEEE/CVF conference on computer vision and pattern recognition}, pp.~11030--11039, 2020.

\end{thebibliography}

\end{document}